\newcommand{\oset}{{\mathcal Z}}
\newcommand{\iset}{{\mathcal X}}
\newcommand{\sig}{{\mathcal S}}
\newcommand{\sigs}{{\mathfrak S}}
\newcommand{\set}{{ S}}
\newcommand{\impsig}{{\hat\sig}}
\newcommand{\jet}{{\mathcal J}}
\newcommand{\f}{{\mathcal F}}
\newcommand{\GL}{{{\mathcal GL}}}
\newcommand{\aff}{{\mathcal A}}
\newcommand{\saff}{{\mathcal {SA}}}
\newcommand{\pgl}{{\mathcal {PGL}}}
\newcommand{\affp}{{\mathcal A(2)}\times{\mathcal A(3)}}
\newcommand{\pta}{{\pgl(3)}\times{\mathcal A(3)}}
\newcommand{\act}{{ \Phi}}
\newcommand{\fp}{{\mathcal {FP}}}
\newcommand{\ap}{{\mathcal {AP}}}
\newcommand{\stpf}{{ P_f^0}}
\newcommand{\stpa}{{ P_a^0}}
\def\x{{\mathbf x}}
\def\z{{\mathbf z}}
\def\eq#1{(\ref{#1})}
\newcommand{\RR}{\mathbb{R}}
\newcommand{\PP}{\mathbb{P}}
\newcommand{\lam}{\lambda}
\newcommand{\beq}{\begin{equation}}
\newcommand{\eeq}{\end{equation}}
\newtheorem{theorem}{Theorem}
\newtheorem{proposition}[theorem]{Proposition}
\newtheorem{problem}[theorem]{Problem}
\newtheorem{definition}[theorem]{Definition}
\newtheorem{corollary}[theorem]{Corollary}
\newtheorem{example}[theorem]{Example}
\newtheorem{remark}[theorem]{Remark}
\newtheorem{notation}[theorem]{Notation}
\newtheorem{algorithm}[theorem]{Algorithm}
\newcommand{\pf}{\begin{proof}}
\newcommand{\foorp}{\end{proof}}
\DeclareMathOperator{\tr}{tr}
\begin{document}
\title{Object-image correspondence for curves under finite and affine cameras}
\author{Joseph M. Burdis and Irina A. Kogan}
\address{ Department of Mathematics,
North Carolina State University, Raleigh, NC, 27695, USA\\  (e-mail: {\tt
jmburdis,iakogan@ncsu.edu})}\thanks{This project was partially supported by NSF grant
CCF-0728801 and NSA grant H98230-11-1-0129}
\date{}

\maketitle
\begin{abstract}

  We provide criteria for deciding whether a given planar curve is an image of a given spatial curve, obtained  by a  central  or a parallel projection with unknown parameters.  These criteria reduce the projection problem to a  certain modification of the  equivalence problem of planar curves under affine and projective transformations. The latter problem  can be addressed using Cartan's moving frame method. 
 This leads to a novel algorithmic solution of the projection problem for curves. The computational advantage of the algorithms presented here, in comparison to algorithms based on  a straightforward solution, lies in a significant reduction of a number of real parameters that has to be eliminated in order to establish existence or non-existence of a projection that maps    a given spatial curve to a given planar curve.
 The same approach can be used to decide whether a given finite set of ordered points on a plane is an image of  a given finite set of ordered points in $\RR^3$. The motivation comes from the problem of establishing a correspondence between an object and an image, taken by  a camera with unknown position and parameters.   
 \end{abstract}

\vskip5mm

\noindent{\bf  Keywords:} {Curve matching, central and parallel projections, finite projective and affine cameras, geometric invariants }
\section{Introduction}

The problem of identification of objects in 3D with their planar images, taken by a camera with unknown position and parameters, is an important task in computer object  recognition. Since the defining features of many objects can be represented by curves, obtaining an algorithmic  solution for the projection problem for curves is essential, but appears to be unknown in the case of projections with a large number of free  parameters. We address this problem for two classes of cameras: finite projective cameras and affine cameras.

The set of {\em finite projective cameras} (also called {\em finite cameras}) has 11 parameters and corresponds to the set of  {\em central projections.} The set of {\em affine cameras} has 8 parameters and corresponds to the set of  
{\em parallel projections}. An affine camera can be obtained as a limit of a finite camera, as the camera center approaches infinity along the  perpendicular from the camera center to the image plane. See \cite{hartley04} for an overview of camera projections and related geometry. 
An affine camera has fewer parameters and provides a good approximation of a finite camera  when   the distance between a camera and an object is significantly greater than the object depth  \cite{hartley04, stiller06}.  

The projection problem for curves is formulated as follows:
 \begin{problem}\label{proj-problem} \vskip-1mmGiven a curve  $\oset$ in $\RR^3$ and a  curve  $\iset$ in $\RR^2$, does there exist a finite camera (central projection) or an affine  camera (parallel projection)  that maps 
$\oset$ to $\iset$? 
\end{problem}

A straightforward approach to Problem~\ref{proj-problem}, in the case of central projections, leads to the following real quantifier eliminations problem:
  \begin{problem}\label{proj-problem-v2} \vskip-1mm Given a curve  $\oset$ in $\RR^3$ and a  curve  $\iset$ in $\RR^2$ decide whether   there exist  11 real parameters, which describe a central projection $P$,  such that
$$\forall x\in \iset\,\, \exists z\in \oset, \mbox{ such that }x=P(z)?$$ 
\end{problem}
Real quantifier elimination problems are algorithmically  solvable \cite{tarski:51}.
There is an extensive body of literature devoted to computationally effective methods in real quantifier elimination, including 
\cite{Collins:75}, \cite{Grig88}, \cite{Hong:90a}, \cite{HRS93}, \cite{pedersen93}. High computational complexity of these algorithms  make a reduction in the number of  quantifiers  to be desirable. 

The projection criteria, developed in this paper, reduces the projection problem to the problem of deciding whether
the given planar curve $\iset$ is equivalent to a curve in a certain family of  {\em planar} curves  under an action of the projective group, in  the case of central projections, and under the action of the affine group  in  the case of parallel projections. The family of curves depends  on 3 parameters in   the case of central projections, and on 2 parameters in the case of parallel projections. 

The group-equivalence problem can be solved by an adaptation  Cartan's moving frame method. 
Following this  method for  the case of central projections, when $\oset$ and $\iset$ are rational algebraic curves, we define two corresponding rational signature maps $\sig_\iset\colon\RR\to\RR^2$ and $\sig_\oset\colon\RR^4\to\RR^2$. Construction of these signature  maps requires only differentiation and arithmetic operations and is computationally trivial.
Problem~\ref{proj-problem-v2} reduces to
\begin{problem}\label{proj-problem-sig} \vskip-1mm Given two rational maps  $\sig_\iset,\, \RR\to\RR^2$ and $\sig_\oset\colon\RR^4\to\RR^2$ decide whether there exist $c_1, c_2,c_3\in\RR$,  such that
$$\forall t\in \RR,\,\mbox{ where } \sig_\iset(t) \mbox{ is defined},\, \exists s\in \RR, \mbox{ such that }\sig_\iset(t)=\sig_\oset(s, c_1,c_2,c_3).$$ 
\end{problem}
Thus, the projection criteria developed in this paper allows us to reduce the number of real quantifiers that need to be eliminated from 13 (11 parameters define a central projection, one is needed to parametrize  curve $\oset$ and another one to parametrize curve $\iset$)  to 5. The case of parallel projection is treated in the similar manner and leads to the reduction of the number of real quantifiers that need to be eliminated from 10 to 4.

 Previous works on related problems include \cite{feldmar95}, where a solution to Problem~\ref{proj-problem} is given for \emph{finite cameras with known internal parameters}.  In this case, the number of free parameters is reduced from 11 to 6 parameters, representing  the position and the orientation of the camera. The method presented in \cite{feldmar95} also uses an additional assumption that a planar curve  $\iset\subset\RR^2$ has at least two points, whose tangent lines coincide.
In the current paper we do not assume that the internal camera parameters  are known.

A solution of the projection problem  for finite ordered sets of points under \emph{affine} cameras appeared in \cite{stiller06,stiller07} and served as an inspiration for this paper. In Section~\ref{finite}, we summarize the approach of  \cite{stiller06,stiller07}
and indicate how the solution of Problem~\ref{proj-problem}  may be adapted
to  produce an alternative solution to the projection problem  for finite ordered sets of points under \emph{either affine or finite} cameras.

One of the advantages of the novel approach to the solution of Problem~\ref{proj-problem}, introduced in this paper, 
is its universality: essentially the same method can be adapted to various types of the projections and various types of objects, both continuous and discrete. 
Similar to many  previous considerations of the projection problems,  we utilize actions of  affine and projective groups to obtain  
 a solution to the projection problem. Our literature search did  not yield, however, neither  previous  solutions  for the projection problem for curves, where  cameras with  unknown internal  and external parameters  are considered, nor a similar combination of ideas as presented here. The algorithmic solution presented here,  would have to be fine-tuned to become practically useful in real-life applications, but  we believe, it has a good potential to develop into a practically efficient method. Some directions of such improvement are indicated in Section~\ref{discussion} of the paper. 

Problem~\ref{proj-problem} can be  generalized to higher dimensions as follows:
\begin{problem}\label{proj-problem2} \vskip-1mm Given a  curve  $\oset$ in $\RR^{n}$ and a  curve  $\iset$ in $\RR^{n-1}$ ($n\geq 3 )$ does there exist a central or a parallel  projection from  $\RR^{n+1}$ to a hyperplane in  $\RR^{n}$ that maps $\oset$ to $\iset$? 
\end{problem}
The solution, proposed  in this paper, has a straightforward adaptation to higher dimensions and will result in a reduction in the number of quantifiers that need to be eliminated from $n(n+1)+1$ to $n+2$ for central projections and from $n^2+1$ to $n+1$ for parallel projections.   In this paper, we restrict the presentation to the case  of $n=3$, which has applications  in computer image recognition and presents least computational challenge. 

The paper is structured as follows. After reviewing the  geometry of finite and affine cameras in Section~\ref{cameras}, we define  actions of direct products of affine and projective groups on the set of cameras in Section~\ref{actions}. We use these actions to reduce  Problem~\ref{proj-problem} for finite and affine projections to a modification of  the equivalence problem for  \emph{planar} curves under projective  and affine transformations, respectively.   This leads to the main result of this paper, projection criteria for curves, formulated in Section~\ref{criteria}. In Section~\ref{group-equiv},
 we review a solution for the group-equivalence problem, based on differential signature construction \cite{calabi98}. In Section~\ref{algorithms}, we combine our projection criteria and the differential signature construction in order to obtain  an algorithm for solving the projection  problem and show some examples.  
 Although the projection criteria derived in Section~\ref{criteria} of the paper  are valid for arbitrary classes of curves (and, more generally, for arbitrary subsets   of
 $\RR^3$ and  $\RR^2$, respectively) the computational algorithms of Section~\ref{algorithms} are developed for rational algebraic curves. We will consider a possible generalization  of these algorithms to non-rational algebraic curves in an upcoming paper  \cite{bk-prep}.
In Section~\ref {finite} we indicate how  the approach of this paper can be applied to the projection problem  for finite ordered sets of points. A solution to the latter problem for affine projections appeared in \cite{stiller06,stiller07}. We provide a brief comparison of the two approaches. 
 In Section~\ref{discussion}, we discuss possible variations of our algorithm based on alternative solutions of the group-equivalence problem, as well as possible adaptations  to  curves  presented
 by samples of discrete points whose coordinates may be known only approximately.


\section{Finite and affine cameras}\label{cameras}
A simple pinhole camera is shown in Figure~\ref{pinhole-camera} and corresponds to a central projection.
\begin{figure}\centering\epsfig{file=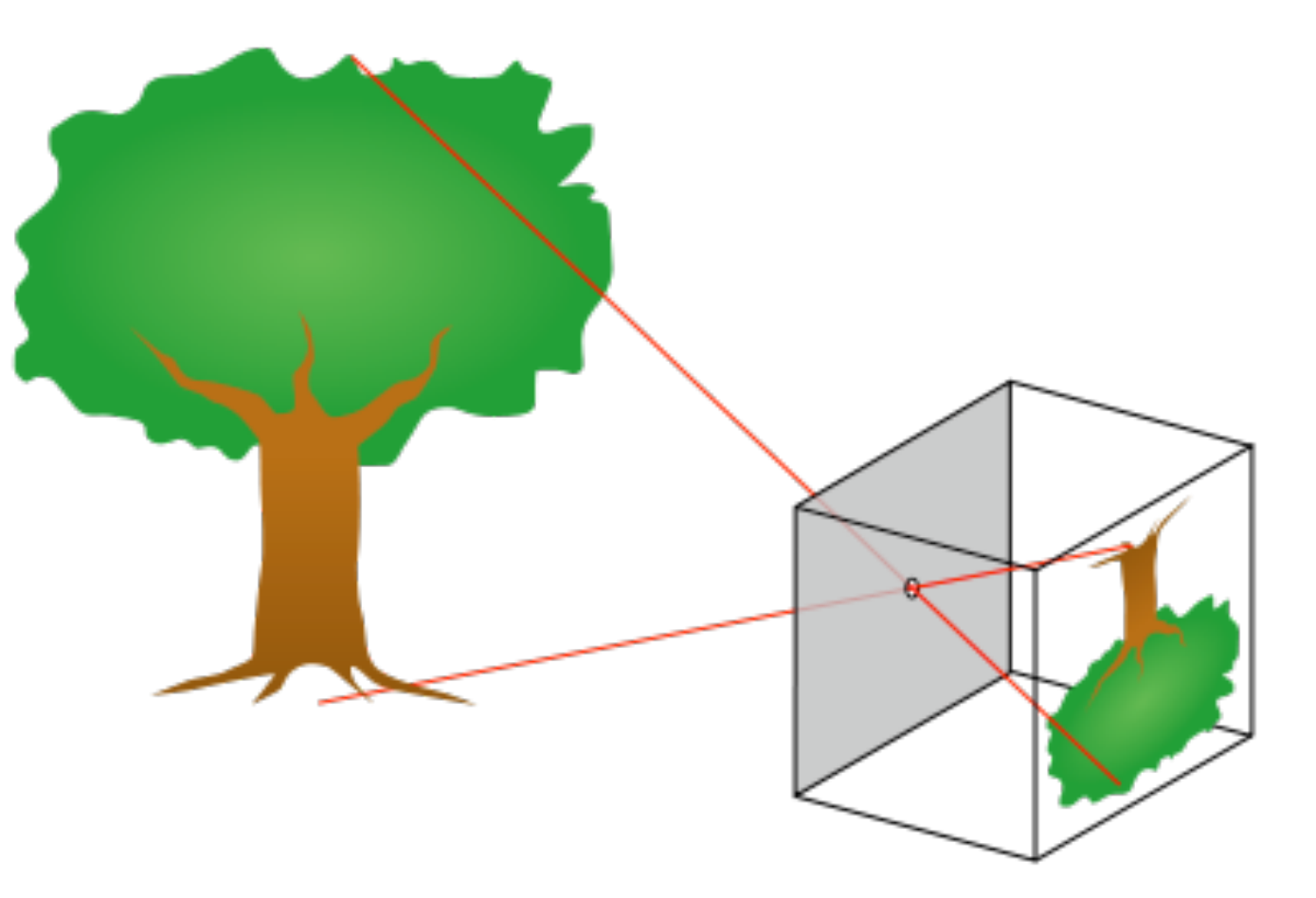,height=1in, width=1.5in}\caption{Pinhole camera \cite{pinhole-figure}.}\label{pinhole-camera}\end{figure}
Let $(z_1,z_2,z_3)$ be   coordinates in $\RR^3$, relative to an orthonormal  coordinate basis, such that   the camera is located at the origin on $\RR^3$ and  the image plane is passing through the point $(0,0,1)$ perpendicular to the $z_3$-axis. We assume that a coordinate system on the image plane is provided by the first two coordinate functions on $\RR^3$, i.e. $x(z_1,z_2,1)=z_1$ and  $y(z_1,z_2,1)=z_2$.
Then a point $(z_1,z_2,z_3)$, such that $z_3\neq 0$ is projected to the point \beq\label{pinhole}(x,y)=\left(\frac {z_1}{ z_3}, \frac {z_2}{ z_3}\right).\eeq
We introduce a freedom  to choose  the position of the camera center ($3$ degrees of freedom), the position of the image plane,  ($3$ degrees of freedom),  as well  
as a choice  of,  in general, non-orthogonal, linear system of coordinates on the image plane ($5$ degrees of freedom, since the overall scale is absorbed by the previous choices, i.e., the choice of the distance between the image plane and the camera center).
For real parameters  $p_{ij}, i=1\dots 3,j=1\dots 4$,  a generic projection maps a point  $(z_1,z_2,z_3)\in\RR^3$  to a point in the image plane with coordinates
\begin{small} 
\begin{eqnarray}\label{proj}
x&=&\frac {p_{11}\, z_1+p_{12}\,z_2+p_{13}\,z_3+p_{14}} {p_{31}\, z_1+p_{32}\,z_2+p_{33}\,z_3+p_{34}},\\
\nonumber  y&=&  \frac {p_{21}\,z_1+p_{22}\,z_2+p_{23}\,z_3+p_{24}} {p_{31}\, z_1+p_{32}\,z_2+p_{33}\,z_3+p_{34}}.
\end{eqnarray}
\end{small}
A convenient matrix representation of this map is obtained by embedding 
$\RR^n$ into projective space $\PP^n$
and utilizing homogeneous coordinates on $\PP^n$. 
\begin{notation} \emph{Square brackets} around matrices (and, in particular,  vectors) will be used 
to denote an equivalence class  with respect to multiplication of a matrix   by a nonzero scalar. Multiplication of equivalence classes of matrices $A$ and $B$
of appropriate sizes is well-defined by $[A]\,[B]:=[A\, B]$.
\end{notation} 
With this notation 
a point $(x,y)\in\RR^2$ corresponds  to a point $[x,y,1]=[\lam x,\lam y,\lam ]\in\PP^2$ for all  $\lam\neq 0$, and a point  $(z_1,z_2,z_3)\in\RR^3$ corresponds to $[z_1,z_2,z_3, 1]\in\PP^3$. We will refer to the points in $\PP^n$ whose   last homogeneous  coordinate is zero as \emph{points at infinity}. In  homogeneous coordinates   projection \eq{proj} is given by\footnote{superscript $tr$ denotes transposition.}
\beq [x,\,y\, ,1]^{\tr}=[P]\, [z_1,\,z_2,\,z_3,\,1]^{\tr},\eeq
where $P$ is $3\times 4$ matrix of rank 3. 

Matrix $P$ has a $1$-dimensional kernel, i.~e.~there exists a  non-zero point   $( z^0_1,z^0_2,z^0_3,z^0_4)\in\RR^4$ such that  $P\,( z^0_1,z^0_2,z^0_3,z^0_4)^{tr}=(0,0,0)^{tr}$. Therefore, the image of the point $[ z^0_1,z^0_2,z^0_3,z^0_4]\in\PP^3$ under the  projection is undefined (recall that $[0,0,0]$ is not a point in $\PP^2$).  Geometrically the kernel of $P$ corresponds to the center of the projection.  

Camera is called {\it finite} if its center is not at infinity.   In the case of finite cameras  the left $3\times 3$ submatrix of $P$ is non-singular. 

 On the contrary  an \emph{infinite} camera has its center at an infinite point of $\PP^3$ and so the  left $3\times 3$ submatrix of $P$ is singular. An infinite camera is called \emph{affine} when  the preimage of the line at infinity in $\PP^2$ is the plane at infinity in $\PP^3$.  In this case $[P]$ can be represented by a matrix whose last row is $(0,0,0,1)$. 
  Affine cameras correspond to {\em parallel projections} from $\RR^3$ to a plane.
   Eight degrees of freedom reflect a choice of the direction of a projection and  a choice  of,  in general non-orthogonal, linear system of coordinates on the image plane. An image plane may be assumed to be perpendicular to the direction of the projection, since other choices are absorbed in the freedom to choose a coordinate system  on the image plane.
\begin{definition}\label{def:cameras}
A set of  equivalence classes $[P]$, where $P=(p_{ij})^{i=1\dots 3}_ {j=1\dots 4}$  is a $3\times 4$ matrix whose  left $3\times 3$ submatrix is non-singular, is called \emph{the set of finite projections} and is denoted $\fp$.  

A set of equivalence  classes $[P]$, where $P=(p_{ij})^{i=1\dots 3}_{ j=1\dots 4}$ has rank 3 and its last row is $(0,0,0,\lambda)$, $\lambda\neq 0$, is called \emph{the set of affine projections} and is denoted $\ap$. Affine projections are also called  {\em generalized weak perspective projections}  \cite{stiller06,stiller07}).
\end{definition}
Equation \eq{proj} determines a {\em central projection} $\RR^3\to\RR^2$ when $[P]\in \fp$ and it determines a  parallel projection  when $[P]\in \ap$. \footnote{From now on, we  refer to {\em central} projections as {\em finite} projections  and to {\em parallel} projections as {\em affine} projections.}
 
 The sets of finite and affine projections are disjoint. Projections that are not included in these two classes are infinite non-affine projections. These are not frequently used in computer vision and are not considered in this paper.     

A simple pinhole camera projection \eq{pinhole} is represented by the matrix:
\beq\label{stpf} \stpf:=\left[
\begin{array}{cccc}
1 & 0 & 0 & 0 \\
0& 1 & 0 & 0 \\
0 & 0 & 1 & 0 \end{array} \right]\eeq
and  is called the \emph{standard finite projection}.
The {\em standard affine projection} is the orthogonal projection on the $z_1z_2$-plane. It is represented by  the matrix
\beq\label{stpa} \stpa:=\left[
\begin{array}{cccc}
1 & 0 & 0 & 0 \\
0& 1 & 0 & 0 \\
0 & 0 & 0 & 1 \end{array} \right].\eeq
\section{Group actions}\label{actions}
Since group actions and, in particular,  actions of the affine and projective groups  play a crucial role in our construction, we review here the relevant definitions:
\begin{definition}\label{act}
An \emph{action} of a group $G$ on a set
  $\set$ is a  map $\act\colon G\times \set \to \set$ that  satisfies the following two
  properties:
\begin{enumerate}
\item $\act(e,s)=s$,  $\forall s\in \set$, where $e$ is the identity of the group.
 \item $\act(g_1, \act(g_2, s))=\act(g_1\, g_2 , s)\),  $\forall\, s \in \set$ and
$\forall\, g_1,g_2\in  G$.
\end{enumerate}
\end{definition}
For  $g\in G$  and  $s\in \set$ we sometimes  write
\(\act(g,s)=g\,s.\)
\begin{definition} \label{trans-def} An action is called  \emph{transitive} if for all $s_1,s_2\in \set$ there exists  $g\in G$   such that $s_1=g\,s_2$.\end{definition}
\begin{definition} For a fixed  element $s\in \set$ the set $G_s=\{g\in G|gs =s\}\subset G$ is called  \emph{the stabilizer of $s$}. \end{definition}
It can be shown that a stabilizer $G_s$ is a subgroup of $G$.

\begin{definition}
The \emph{projective group} $\pgl(n+1)$ is a quotient of  the general linear group $\GL(n+1)$, consisting of $ (n+1)\times(n+1)$ non-singular  matrices, by a 1-dimensional abelian  subgroup $\lam I$, where $\lam\neq 0 \in \RR$ and  $I$ is the identity matrix. Elements of $\pgl(n+1)$ are equivalence classes $[B]=[\lam B]$, where $\lam\neq 0$ and    $B\in \GL(n+1)$.
 
The  \emph{affine}  group  $\aff(n)$ is a subgroup   of $\pgl(n+1)$ whose elements $[B]$ have  a representative $B\in \GL(n+1)$ whose last row is $(0,\dots,0,1)$.  

The  \emph{special affine}  group  $\saff(n)$ is a subgroup   of $\aff(n)$ whose elements $[B]$ have  a representative $B\in \GL(n+1)$ with determinant $1$ and the last row equal to $(0,\dots,0,1)$. \end{definition}

In homogeneous coordinates the standard action of the projective group $\pgl(n+1)$  on $\PP^n$  is defined by: 
\beq\label{homog-act}\act([B],[z_1,\dots, z_n, z_0]^{\tr})=[B]\,[z_1,\dots, z_n, z_0]^{\tr}.\eeq
The action \eq{homog-act} induces  almost everywhere defined  linear-fractional action of $\pgl(n+1)$ on $\RR^n$.  In particular, for $n=2$, $[B]\in\pgl(3)$ we have 
\beq\label{pr2} (x,y)\to \left(\frac {b_{11}\, x+b_{12}\,y+b_{13}} {b_{31}\, x+b_{32}\,y+b_{33}}, \frac {b_{21}\, x+b_{22}\,y+b_{23}}{b_{31}\, x+b_{32}\,y+b_{33}}\right).\eeq
The restriction of  \eq{homog-act} to $\aff(n)$  induces an action on $\RR^n$ consisting of   compositions of linear transformations and translations.   In particular, for $n=2$ and  $[B]\in\aff(2)$  represented by a matrix   $B$ whose last row is $(0,0,0,1)$
\beq\label{aff2} (x,y)\to \left(b_{11}\, x+b_{12}\,y+b_{13},\, b_{21}\, x+b_{22}\,y+b_{23}\right).\eeq


  \subsection{Action on finite cameras}
  A straightforward exercise in matrix multiplication shows that the map $\act:\left(\pta\right)\times\fp\to\fp$ defined by 
  \beq\label{product-action1}
\act\big(([A],[B]),\,[P])=  [A]\,[P]\,[B^{-1}]
  \eeq
   for  $[P]\in\fp$  and  $([A],[B])\in \pta$,  satisfies Definition~\ref{act} of a  group-action.
 
\begin{proposition} \label{trans_f}The action of $\pta$ on $\fp$, defined by \eq{product-action1} is transitive.
 \end{proposition}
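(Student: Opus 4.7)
The plan is to prove transitivity by showing every element of $\fp$ lies in the same orbit as the standard finite projection $\stpf=[I\,|\,0]$; by the group-action axioms this suffices. So given an arbitrary $[P]\in\fp$, I will explicitly produce $[A]\in\pgl(3)$ and $[B]\in\aff(3)$ with $[A]\,[P]\,[B^{-1}]=[\stpf]$.

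Write $P=(M\,|\,v)$, where $M$ is the left $3\times 3$ submatrix and $v\in\RR^3$ is the last column. By Definition~\ref{def:cameras}, $M$ is non-singular, so $[M^{-1}]\in\pgl(3)$ is well-defined. First I would set $A=M^{-1}$; then $A\,P=(I\,|\,M^{-1}v)$. The remaining task is to eliminate the last column by right-multiplication with an affine group element, i.e.\ to find $[B]\in\aff(3)$ such that $(I\,|\,M^{-1}v)\,B^{-1}=(I\,|\,0)$.

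For this I would take
\[
B=\begin{pmatrix} I & M^{-1}v \\ 0 & 1\end{pmatrix},\qquad
B^{-1}=\begin{pmatrix} I & -M^{-1}v \\ 0 & 1\end{pmatrix},
\]
both of which are in $\GL(4)$ with last row $(0,0,0,1)$, so $[B]\in\aff(3)$. A direct block-matrix computation shows $(I\,|\,M^{-1}v)\,B^{-1}=(I\,|\,{-}M^{-1}v+M^{-1}v)=(I\,|\,0)$, as desired. Hence $\act(([M^{-1}],[B]),[P])=[\stpf]$, proving every orbit contains $[\stpf]$.

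To conclude transitivity, for any two elements $[P_1],[P_2]\in\fp$ I would combine the two reductions: if $g_i$ sends $[P_i]$ to $[\stpf]$, then $g_2^{-1}g_1$ sends $[P_1]$ to $[P_2]$. The only minor point requiring attention is that the formula $\eq{product-action1}$ really is a left action on $\fp$ (so inverses compose in the expected order), which is the straightforward matrix-multiplication check alluded to immediately before the proposition. There is no essential obstacle here; the content of the proof is simply the explicit normalization of $(M\,|\,v)$ into $(I\,|\,0)$ using one projective matrix on the left and one affine translation on the right.
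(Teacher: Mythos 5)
Your proof is correct and follows essentially the same route as the paper's: reduce to showing every $[P]\in\fp$ lies in the orbit of $[\stpf]$, then normalize $P=(M\,|\,v)$ using the invertible left block $M$ on the projective side and the translation by $M^{-1}v$ (the paper's $(c_1,c_2,c_3)$ with $p_{*4}=c_1p_{*1}+c_2p_{*2}+c_3p_{*3}$) on the affine side. The only difference is presentational: the paper exhibits the factorization $[P]=[A][\stpf][B]$ while you apply the group element directly to send $[P]$ to $[\stpf]$.
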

 \begin{proof}
  According to Definition~\ref{trans-def} we need to prove that for all $[P_1],[P_2]\in \fp$ there exists $([A],[B])\in \pta$ such that $[A][P_1][B^{-1}]=[P_2]$. It is sufficient to prove that for all $[P]\in \fp$ there exists $([A],[B])\in \pta$ such that $[P]=[A]\,[\stpf]\, [B]$, where $[\stpf]$ is the standard finite projection \eq{stpf}.
 A finite projection  is given by a $3\times 4$ matrix  $P=(p_{ij})^{i=1\dots 3}_{j=1\dots 4}$  whose left $3\times 3$ submatrix is non-singular. Therefore there exist $c_1,\,c_2,\,c_3\in\RR$ such that $p_{*4}=c_1\,p_{*1}+c_2\,p_{*2}+c_3\,p_{*3}$, where $p_{*j}$ denotes the $j$-th column of the matrix $P$.
  We define  $A:=(p_{ij})^{i=1\dots 3}_{ j=1\dots 3}$ to be the left $3\times 3$ submatrix of $P$ 
and \beq\label{Bf}
B:=\left(
\begin{array}{cccc}
1 & 0 & 0  &  c_1 \\
0 & 1 & 0  &  c_2 \\
0 & 0 & 1  &  c_3 \\
0 & 0 & 0  & 1 
\end{array} \right).
\eeq
 We observe that $([A],[B])\in\pta$ and  $[A][\stpf] [B]=[P]$.
\end{proof}
\begin{corollary} The set $\fp$ of  finite projections is diffeomorphic to the homogeneous  space $(\pta)\slash H_f^0$, where $H_f^0$ is the 9-dimensional stabilizer of $[\stpf]$.  \end{corollary}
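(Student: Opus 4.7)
The plan is to apply the standard orbit-stabilizer theorem for smooth transitive Lie group actions: whenever a Lie group $G$ acts smoothly and transitively on a manifold $M$, the map $g\mapsto g\cdot s_0$ descends to a diffeomorphism $G/G_{s_0}\to M$ for any $s_0\in M$. Here $G=\pta$ (a Lie group), $M=\fp$ (a smooth manifold, being an open subset of $\mathbb{P}^{11}$, the projective space of $3\times 4$ matrices), and the action \eq{product-action1} is visibly smooth. Transitivity has just been established in Proposition~\ref{trans_f}. So once we identify the stabilizer $H_f^0$ of $[\stpf]$ as a closed Lie subgroup and verify it has dimension $9$, the corollary follows from this general principle.

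The substantive work is therefore to compute $H_f^0$ explicitly. I would take $([A],[B])\in\pta$ with $B$ in the canonical form having last row $(0,0,0,1)$, write $B^{-1}=\begin{pmatrix} B_{11}^{-1} & -B_{11}^{-1}B_{12}\\ 0 & 1\end{pmatrix}$, and impose the condition $[A]\,[\stpf]\,[B^{-1}]=[\stpf]$. Since $A\cdot[\stpf]=[A\mid 0]$, a direct multiplication gives
\begin{equation}
[A\mid 0]\,B^{-1}=[A B_{11}^{-1}\mid -A B_{11}^{-1}B_{12}],
\end{equation}
which must equal $\lambda[\stpf]=[\lambda I\mid 0]$ for some nonzero $\lambda$. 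This forces $B_{12}=0$ and $B_{11}=\lambda^{-1}A$ (with $A$ any chosen representative of $[A]$). Thus $H_f^0$ consists of pairs
\begin{equation}
\left([A],\,\left[\begin{pmatrix} \mu A & 0 \\ 0 & 1\end{pmatrix}\right]\right),\qquad [A]\in\pgl(3),\ \mu\in\RR\setminus\{0\}.
\end{equation}

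Counting parameters: $[A]$ contributes $\dim\pgl(3)=8$ and the scalar $\mu$ contributes $1$, giving $\dim H_f^0=9$. As a sanity check, $\dim\pta=\dim\pgl(3)+\dim\aff(3)=8+12=20$ while $\dim\fp=11$, and indeed $20-11=9$, confirming the dimension count is consistent with the orbit being all of $\fp$.

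The only step that requires any care is the verification that the bijection $\pta/H_f^0\to\fp$ coming from orbit-stabilizer is actually a diffeomorphism rather than just a smooth bijection. This is the standard non-trivial point in the homogeneous space theorem, but it holds here for free because $\pta$ is a Lie group acting smoothly on a manifold and $H_f^0$, cut out by polynomial equations in the group, is a closed Lie subgroup; the quotient is then a smooth manifold and the induced map is a smooth bijection of equal-dimensional manifolds that is equivariant, hence a diffeomorphism. I expect the stabilizer computation above to be the only genuinely new calculation; everything else is an invocation of standard Lie-theoretic machinery together with Proposition~\ref{trans_f}.
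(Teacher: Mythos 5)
Your proof is correct and follows essentially the same route as the paper: transitivity from Proposition~\ref{trans_f} plus the standard homogeneous-space theorem, together with a direct computation of the stabilizer of $[\stpf]$. Your description of $H_f^0$ as pairs $([A],[\mathrm{diag}(\mu A,1)])$ with $[A]\in\pgl(3)$, $\mu\neq 0$ agrees with the paper's parametrization by $A\in\GL(3)$ in \eq{Hf}, and both give dimension $9$.
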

A straightforward computation shows that
\begin{equation}  \label{Hf}
    H_f^0=\left\{\left([A],\,  
\left[\begin{array}{cc}
 A & \bf{0}^{\tr}\\
\bf{0}& 1
  \end{array} \right] \right)\right\}, \mbox{ where $A\in \GL(3)$.}
  \end{equation}
\begin{remark} It follows from the proof of Proposition~\ref{trans_f}  that any finite projection is a composition of a translation in $\RR^3$ (corresponding to translation of the camera center to the origin), the standard projection \eq{stpf} (pinhole camera), and a projective transformation on the image plane.  \end{remark}

 \subsection{Action on affine cameras}
Formula \eq{product-action1} with  $[P]\in\ap$  and $([A],[B])\in \affp$ 
 defines an action of the direct product $\affp$ on the  set of affine projections $\ap$.

\begin{proposition} \label{transa}The action of $\affp$ on $\ap$, defined by \eq{product-action1}, is transitive.
 \end{proposition}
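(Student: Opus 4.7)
The plan is to follow the structure of the proof of Proposition~\ref{trans_f}. By the group-action axioms, it is enough to show that every $[P]\in\ap$ can be written as $[A]\,[\stpa]\,[B]$ for some $([A],[B])\in\affp$, since then any two affine projections are connected through $[\stpa]$. Normalize a representative $P$ so that its last row is $(0,0,0,1)$; this is possible because, by Definition~\ref{def:cameras}, the last row of any representative is $(0,0,0,\lambda)$ with $\lambda\neq 0$. In block form,
\[
P = \begin{pmatrix} P_L & p_4 \\ \mathbf{0} & 1 \end{pmatrix},
\]
where $P_L$ is a $2\times 3$ block and $p_4\in\RR^2$. The rank-$3$ condition on $P$, together with the bottom row $(0,0,0,1)$, forces $P_L$ to have rank $2$.

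Next, I would write representatives of $[A]\in\aff(2)$ and $[B]\in\aff(3)$ in the analogous block form,
\[
A = \begin{pmatrix} A_L & a_3 \\ \mathbf{0} & 1 \end{pmatrix},\qquad B = \begin{pmatrix} B_L & b_4 \\ \mathbf{0} & 1 \end{pmatrix},
\]
with $A_L\in\GL(2)$, $B_L\in\GL(3)$, and translations $a_3\in\RR^2$, $b_4\in\RR^3$. Letting $E=(I_2\mid\mathbf{0})$ denote the $2\times 3$ matrix that extracts the first two rows of a $3\times 3$ matrix, a direct block computation yields
\[
A\,\stpa\, B = \begin{pmatrix} A_L\, E\, B_L & A_L\, E\, b_4 + a_3 \\ \mathbf{0} & 1 \end{pmatrix}.
\]
Matching this with $P$ reduces the problem to two conditions: (i) $A_L(E B_L) = P_L$ and (ii) $A_L(E b_4) + a_3 = p_4$.

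Both are easy to solve. I would take $A_L = I_2$ and choose $B_L$ to be any $3\times 3$ invertible matrix whose top two rows are the rows of $P_L$; such a completion exists precisely because $P_L$ has rank $2$, so one can always append a third row linearly independent of the first two. Then condition (i) is satisfied, and condition (ii) is satisfied by setting $b_4 = \mathbf{0}$ and $a_3 = p_4$. I do not expect any real obstacle here: the argument is essentially structured block multiplication, and the only nontrivial ingredient is the rank-$2$ property of $P_L$, which is what allows $P_L$ to be extended to a nonsingular $B_L$ and comes directly from the definition of $\ap$.
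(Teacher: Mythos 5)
Your proof is correct, and it reaches the same pivotal reduction as the paper — namely that transitivity follows once every $[P]\in\ap$ is factored as $[A]\,[\stpa]\,[B]$ with $([A],[B])\in\affp$ — but the factorization itself is chosen differently. The paper selects two columns $i<j$ of the $2\times 3$ block $P_L$ that are linearly independent, loads those columns (together with the translation part) into $A\in\aff(2)$, and encodes the linear dependence of the remaining column in a $B$ that is one of only three near-identity matrices carrying two parameters $c_1,c_2$ (Remark~\ref{B-cases}). You instead take $A$ to be the pure translation by $p_4$ and absorb the entire linear part into $B$ by completing the rank-$2$ block $P_L$ to a matrix $B_L\in\GL(3)$. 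Your version is arguably the cleaner way to prove transitivity in isolation, since it avoids the case split over $(i,j,k)$. What the paper's more rigid choice buys is downstream leverage: because its $B$ ranges over a three-case, two-parameter family, the proof of Theorem~\ref{main-affine-camera} can read off that $[\stpa][B][\oset]$ is exactly the two-parameter family of planar curves $\tilde\oset^{i,j,k}_{c_1,c_2}$, which is the whole point of the projection criteria. With your factorization, $B_L$ is an essentially arbitrary element of $\GL(3)$, so the analogous reduction would not fall out directly. As a proof of Proposition~\ref{transa} alone, however, your argument is complete; the only ingredients you rely on — that $\rank P=3$ with last row $(0,0,0,1)$ forces $\rank P_L=2$, and that a rank-$2$ $2\times3$ block extends to an invertible $3\times3$ matrix — are both valid.
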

 \begin{proof}
 It is sufficient to prove that for all $[P]\in \ap$ there exists $([A],[B])\in \affp$ such that $[P]=[A]\,[\stpa]\,[B]$, where $\stpa$ is the standard projection \eq{stpa}.
 An affine  projection $P$ is given by  the matrix  
 \beq\label{aff:proj}
P= \left(\begin{array}{cccc}
  p_{11}&  p_{12}&p_{13}  &p_{14} \\
   p_{21}&  p_{22}&p_{23}  &p_{24}\\
   0& 0  &0 &1
\end{array}\right)\eeq
  of rank 3. Therefore there exist $1\leq i<j\leq 3$ such that the rank of the submatrix
 $
\left(
\begin{array}{cc}
  p_{1i}& p_{1j}   \\
   p_{2i}& p_{2j}   
\end{array}
\right)
$
  is 2.  Then for $1\leq k\leq 3$, such that  $k\neq i$ and $k\neq j$, there exist $c_1,\,c_2\in\RR$, such that 
  $\left(
\begin{array}{c}
  p_{1k}   \\
   p_{2k}   
\end{array}
\right)=c_1\,\left(\begin{array}{c}
  p_{1i}   \\
   p_{2i}   
\end{array}
\right)+c_2\,\left(\begin{array}{c}
  p_{1j}   \\
   p_{2j}   
\end{array}
\right)$.
We define \beq\label{Aa}A:=
\left(\begin{array}{ccc}
  p_{1i}& p_{1j}&p_{14}   \\
   p_{2i}& p_{2j} &p_{24}\\
   0&0&1  
\end{array}
\right)   \eeq
and 
define $B$ to be the matrix whose columns are   vectors $b_{*i}:=(1,0,0,0)^{\tr}$, $b_{*j}:=(0,1,0,0)^{\tr}$, $b_{*k}:=(c_1,c_2,1,0)^{\tr}$, $b_{*4}=(0,0,0,1)^{\tr}$. 
We observe that $([A],[B])\in\affp$ and that $[A][\stpa] [B]=[\stpa]$.
\end{proof}
\begin{remark}\label{B-cases}
Note that there are only three possible values of $(i,j,k)$ in the above proof:
\begin{enumerate}
\item  [if] $(i,\,j,\,k)=(1,2,3)$,  then 
\beq\label{B}B=
\left(\begin{array}{cccc}
  1&  0&c_1  &0 \\
   0& 1& c_2  &0\\
   0& 0 &1  &0\\
   0& 0  &0 &1
\end{array}\right);\eeq
\item [if] $(i,\,j,\,k)=(1,3,2)$, then the corresponding $B$ is obtained by interchanging  the second and the third column in \eq{B};
 \item[if] $(i,\,j,\,k)=(2,3,1)$, then the corresponding $B$ is  obtained by a cyclic shift (by one to the right)  of the first  three columns in \eq{B}.
 \end{enumerate}
\end{remark}
\begin{corollary} The set $\ap$ of  affine projections is diffeomorphic to the homogeneous  space $(\affp)\slash H_a^0$, where $H_a^0$ is the 10-dimensional stabilizer of $[\stpa]$.  \end{corollary}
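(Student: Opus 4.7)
The corollary follows the same template as its counterpart after Proposition~\ref{trans_f}. Since $\affp$ is a Lie group acting smoothly on the manifold $\ap$, and transitively by Proposition~\ref{transa}, the standard fact that a smooth transitive Lie group action identifies the acted-upon manifold with the quotient of the group by the stabilizer of any basepoint yields a natural diffeomorphism $(\affp)\slash H_a^0\to\ap$, $([A],[B])H_a^0\mapsto [A][\stpa][B^{-1}]$. The only remaining task is to verify $\dim H_a^0=10$.

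A quick dimension count does it in principle: $\dim\aff(2)=6$ and $\dim\aff(3)=12$, so $\dim(\affp)=18$, while $\ap$ is $8$-dimensional by Definition~\ref{def:cameras}, forcing $\dim H_a^0=18-8=10$. For an explicit description of $H_a^0$ that mirrors \eq{Hf} (and will be useful later), I would fix canonical representatives with last row $(0,0,1)$ for $A$ and $(0,0,0,1)$ for $B$. In these representatives both $A\,\stpa$ and $\stpa\,B$ have last row $(0,0,0,1)$, so the scalar factor in the equivalence-class equation $[A][\stpa][B^{-1}]=[\stpa]$ is pinned to $1$, and the stabilizer condition reduces to the honest matrix equation $A\,\stpa=\stpa\,B$. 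Reading this off column by column expresses the first two rows of $B$ in terms of the entries of $A$, while the third row of $B$ remains free subject only to $B\in\GL(4)$. This leaves the six free entries of $A$ together with four free entries in the third row of $B$, recovering $\dim H_a^0=10$.

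The only delicate step is the passage from an equivalence-class equation to an equation between actual matrices; the last-row computation cleanly disposes of the scalar indeterminacy, after which the rest is a routine parameter count strictly parallel to the one that produced \eq{Hf}.
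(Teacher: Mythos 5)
Your proposal is correct and takes essentially the same route as the paper: the paper's (implicit) argument is exactly the standard orbit--stabilizer identification applied to the transitive action of Proposition~\ref{transa}, together with the explicit computation of the stabilizer in \eq{Ha}, whose $10$ free parameters ($m_{11},m_{12},m_{21},m_{22},a_1,a_2$ plus the third row $m_{31},m_{32},m_{33},a_3$ of $B$) match both your matrix equation $A\,\stpa=\stpa\,B$ and your dimension count $18-8=10$.
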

A straightforward computation shows that
\begin{equation}  \label{Ha}
    H^0_a=\left\{ \left( \left[
\begin{array}{cccc}
 m_{11} & m_{12} & a_1 \\
 m_{21} & m_{22} & a_2 \\
 0 & 0 & 1  \end{array} \right], \left[
\begin{array}{cccc}
 m_{11} & m_{12} & 0 & a_1 \\
 m_{21} & m_{22} & 0 & a_2 \\
 m_{31} & m_{32} & m_{33} & a_3  \\
 0 & 0 & 0 & 1  \end{array} \right] \right) \right\},
  \end{equation}

 where $m_{33}\,(m_{11}m_{22}-m_{12}m_{21})\neq 0$.
 
 \section{Projection criteria for  curves}\label{criteria}
 In this section we formulate criteria for the existence of a finite or an affine projection  that maps a given algebraic curve in $\RR^3$ to a given algebraic curve in $\RR^2$.
 
 We recall \cite{fulton} that for every algebraic curve $\iset\subset\RR^n$ there exists a unique projective algebraic curve   $[\iset]\subset\PP^n$ such that $[\iset]$ is the smallest projective variety containing $\iset$. As before, we  identify a point on $ [\iset]$ with the column vector of its homogeneous coordinates.

 \begin{definition}\label{def-proj} We say that a curve  $\oset \subset \RR^3$  {\em projects onto} $\iset\subset \RR^2$ if there exists a $3\times 4$ matrix $P$ of rank 3  
    such that  the set $[P][\oset]$ is dense in $[\iset]$. In this definition, we allow the center of the projection $[P]$ to lie on $[\oset]$, and if this happens  $[P][\oset]$ is undefined at one point.  
     \end{definition}
 Note that if      $\oset \subset \RR^3$  {\em projects onto} $\iset\subset \RR^2$ according to Definition~\ref{def-proj} then the image 
 of $\oset$ under the map \eq{proj} is dense in $\iset$. Disregarding possible exclusions of finites sets of points, we write
  $\iset=[P](\oset)$ and $[\iset]=[P][\oset]$ if Definition~\ref{def-proj} is satisfied.

In the next two subsections we show that the projection problem for central and parallel projections can be reduced to  a variation of  the  equivalence problem of planar curves under projective and affine actions, respectively. 
\begin{definition}\label{def-equi} We say that two curves  $\iset_1\subset\RR^n$ and  $\iset_2\subset\RR^n$  are $\pgl(n+1)$-equivalent (and also that $[\iset_1],  [\iset_2]\in\PP^n$ are $\pgl(n+1)$-equivalent)   if there exists
$[A]\in\pgl(n+1)$, such that 
\beq \label{eq-prgr}[\iset_2]=\{[A] [x]\,|\, [x]\in[\iset_1]\}.\eeq 
If \eq{eq-prgr} is satisfied for $[A]\in G$, where $G$ is a subgroup of  $\pgl(n+1)$, we say that  $\iset_1$ and  $\iset_2$  are $G$-equivalent.
\end{definition}
We write
  $\iset_2=[A](\iset_1)$ and $[\iset_2]=[A][\iset_1]$ if Definition~\ref{def-equi} is satisfied.
Before stating the projection criteria, we make the following simple, but important  observation.
 
\begin{proposition}\label{proj-classes}\begin{enumerate}
 \item [(i)] If $\oset\subset\RR^3$ projects onto $\iset\subset\RR^2$ by an \emph{affine projection}, then any curve that is $\aff(3)$-equivalent to $\oset$ projects onto any curve that is $\aff(2)$-equivalent to $\iset$ by an \emph{affine projection}. In other words, affine projections are defined on affine equivalence classes of curves.
 \item[(ii)] If $\oset\subset\RR^3$ projects onto $\iset\subset\RR^2$ by a \emph{finite  projection} then any curve in $\RR^3$ that is $\aff(3)$-equivalent to $\oset$ projects onto any curve on $\RR^2$ that is $\pgl(3)$-equivalent to $\iset$ by a \emph{finite projection}. 
 \end{enumerate}
\end{proposition}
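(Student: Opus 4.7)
The plan is to show both parts via a direct computation that converts the original projection by conjugation with the relevant group elements, and then to invoke the group actions on projections introduced in Section~\ref{actions} to check that the resulting projection still lies in the prescribed class. The key observation is that the operation $[P]\mapsto [A_2][P][A_1^{-1}]$ in \eq{product-action1} is precisely what is needed to convert a projection between the transformed curves.

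For part (ii), suppose $[\iset]=[P][\oset]$ with $[P]\in\fp$. Let $\oset'=[A_1](\oset)$ with $[A_1]\in\aff(3)$, and $\iset'=[A_2](\iset)$ with $[A_2]\in\pgl(3)$. I would then set
\[
[P']:=[A_2]\,[P]\,[A_1^{-1}],
\]
and verify that $[P'][\oset'] = [A_2][P][A_1^{-1}][A_1][\oset] = [A_2][P][\oset] = [A_2][\iset] = [\iset']$ (up to the possibly missing point coming from the kernel of $[P]$ or of $[P']$, which is handled by the density clause in Definition~\ref{def-proj}). The remaining point is that $[P']\in\fp$. This follows because \eq{product-action1} defines an action of $\pta$ on $\fp$ (the very action whose transitivity was proved in Proposition~\ref{trans_f}); in particular $\fp$ is closed under this action, so $[P']\in\fp$. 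Part (i) is handled identically, using instead that \eq{product-action1} defines an action of $\affp$ on $\ap$ (Proposition~\ref{transa}), so $[P']\in\ap$ whenever $[P]\in\ap$, $[A_1]\in\aff(3)$, and $[A_2]\in\aff(2)$.

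The only technical nuisance to watch for is the center-on-curve edge case mentioned in Definition~\ref{def-proj}: the equality $[P'][\oset']=[\iset']$ must be read up to a possible exclusion of a single point, since the projection can be undefined where the curve meets the kernel of $P$ or of $P'$. Since Definition~\ref{def-proj} already requires only that $[P][\oset]$ be dense in $[\iset]$, the density transfers under the bijections induced by $[A_1]$ on $[\oset]$ and by $[A_2]$ on $[\iset]$, so no genuine difficulty arises here.

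In short, there is no real obstacle: the proposition is essentially a restatement of the fact that the two actions defined by \eq{product-action1} send $\fp$ to $\fp$ and $\ap$ to $\ap$, respectively, combined with the intertwining identity $[P'][A_1]=[A_2][P]$. The only step requiring any care is checking closure of the classes $\fp$ and $\ap$ under the relevant action, which is immediate from Definition~\ref{def:cameras}: affinity of $[A_1]$ preserves the last-row condition characterizing $\ap$, and non-singularity of the left $3\times 3$ block of $P$ is preserved under left multiplication by an invertible $3\times 3$ matrix (for part (ii)) and under the more restrictive $\aff(2)$ action (implicitly for part (i)).
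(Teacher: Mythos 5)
Your proof is correct and is essentially identical to the paper's: the paper also sets the new projection equal to $[A]\,[P]\,[B^{-1}]$, notes that it stays in $\ap$ (resp.\ $\fp$) because \eqref{product-action1} is an action on that set, and concludes that $[B][\oset]$ projects onto $[A][\iset]$. Your extra remarks on the density clause and the center-on-curve edge case are a harmless elaboration of details the paper leaves implicit.
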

\begin{proof} (i) Assume that there exists an affine projection $[P]\in \ap$  such that $[\iset]=[P][\oset]$. Then for all $(A,B)\in\affp$ we have  $[A]\,[\iset]=[A]\,[P]\, [B^{-1}]\,\left([B]\,[\oset]\right)$. Since $[A]\,[P]\,[B^{-1}]\in\ap$,  a set  $[B][\oset]$  projects onto $[A] [\iset]$. (ii) is proved similarly.\end{proof}
It is \emph{not true} in general that if  a curve $\oset$ can be projected onto two planar curves $\iset_1$ and $\iset_2$ by an affine (or a finite) projection, then  the curves  $\iset_1 $  and  $\iset_2$ are $\aff(2)$-equivalent (or   $\pgl(3)$-equivalent). Counterexamples  appear  in Example~\ref{ex-finite} (for finite projections) and  Example~\ref{ex2} (for affine projections).

 We are now ready to state and prove  the projection criteria.
 \begin{theorem} \label{main-finite-camera}({\sc finite projection criteria}) A  curve $\oset\subset\RR^3$  projects onto a  curve $\iset \subset\RR^2$ by a \emph{finite projection}  if and only if there exist  $c_1, c_2, c_3 \in \RR$ such that  the projective  curve 
 \beq\label{poset} [\tilde\oset_{c_1, c_2, c_3}] =\left\{\left[{z_1+c_1},\, {z_2+c_2},\, {z_3+c_3}\right]\,\Big|\, \forall (z_1,z_2,z_3)\in \oset, \right\}\subset\PP^2\eeq
is  $\pgl(3)$-equivalent to $[\iset]\subset\PP^2$.  
\end{theorem}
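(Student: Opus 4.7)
The plan is to reduce the statement to a direct consequence of Proposition~\ref{trans_f}. The key observation, which I would verify first by a one-line matrix multiplication, is that with $B$ given by equation \eqref{Bf},
\[
[\stpf]\,[B]\,[z_1,z_2,z_3,1]^{\tr} \;=\; [z_1+c_1,\,z_2+c_2,\,z_3+c_3]\in\PP^2,
\]
which is exactly the formula appearing in \eqref{poset}. Hence $[\stpf][B][\oset]=[\tilde\oset_{c_1,c_2,c_3}]$, and the theorem amounts to saying that after translating the camera center to the origin (which absorbs the parameters $c_1,c_2,c_3$), the remaining freedom in a finite projection is exactly the $\pgl(3)$-action on the image plane.

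With this identity in hand, both directions become routine. For the forward implication, I would take a finite projection $[P]\in\fp$ with $[P][\oset]$ dense in $[\iset]$, and decompose $[P]=[A][\stpf][B]$ following the proof of Proposition~\ref{trans_f}: $A$ is the left $3\times 3$ block of $P$, and $B$ is built from \eqref{Bf} using the unique triple $(c_1,c_2,c_3)$ satisfying $p_{*4}=c_1\,p_{*1}+c_2\,p_{*2}+c_3\,p_{*3}$. The identity above yields $[P][\oset]=[A][\tilde\oset_{c_1,c_2,c_3}]$, and passing to projective closures identifies the right-hand side with $[\iset]$, exhibiting $[A]$ as the desired $\pgl(3)$-equivalence. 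For the converse, I would take $c_1,c_2,c_3$ and $[A]$ witnessing the equivalence, build $B$ via \eqref{Bf}, set $[P]:=[A][\stpf][B]$, and use nonsingularity of $A$ to conclude $[P]\in\fp$, then apply the identity again to get $[P][\oset]=[\iset]$.

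The main obstacle is bookkeeping rather than mathematics: I expect no delicate estimates, but I would need to be careful about (i) the mismatch between the set-theoretic equality in Definition~\ref{def-equi} and the density condition in Definition~\ref{def-proj}, which I would reconcile by consistently passing to projective closures and invoking uniqueness of the smallest projective variety containing a given curve; and (ii) the edge case in which the camera center $[-c_1,-c_2,-c_3,1]$ lies on $[\oset]$, which leads to a single excluded point on each side and is harmlessly absorbed by the density language already built into Definition~\ref{def-proj}.
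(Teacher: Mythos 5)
Your proposal is correct and follows essentially the same route as the paper's own proof: the decomposition $[P]=[A][\stpf][B]$ from Proposition~\ref{trans_f} together with the identity $[\stpf][B][z_1,z_2,z_3,1]^{\tr}=[z_1+c_1,z_2+c_2,z_3+c_3]^{\tr}$ is exactly the argument given there, in both directions. Your extra care about reconciling density with set equality and about the camera center lying on $[\oset]$ addresses points the paper leaves implicit, but does not change the substance of the proof.
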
  
\begin{proof}  
($\Rightarrow$)Assume  there exists a finite projection $[P]$  such that  $[\iset]=[P]\, [\oset]$.
 It was established in the proof of Proposition~\ref{trans_f} that $[P]=[A]\,[\stpf]\,[B]$ for some  $[A]\in \pgl(3)$ and  $[B]\in \saff(3)$, where $B$ is given by \eq{Bf}
for some $c_1, c_2, c_3 \in \RR$, and  $\stpf$ is the standard finite projection \eq{stpf}. 
 Therefore  $[\iset] = [A][\stpf]\, [B]\,[\oset]$. Since 
$$[\stpf][B][z_1,z_2,z_3,\,1]^{\tr} = [z_1+c_1,z_2+c_2,z_3+c_3]^{\tr},$$
 then   $[\iset] = [A][\tilde\oset_{c_1,c_2,c_3}]$, where $[\tilde\oset_{c_1,c_2,c_3}]$ is defined by \eq{poset}  
  %
  
($\Leftarrow$) To prove the converse direction we assume that there exists $[A]\in \pgl(3)$ and $c_1, c_2, c_3 \in \RR$ such that $[\iset]=[A] [\tilde\oset_{c_1,c_2,c_3}]$, where $[\tilde\oset_{c_1,c_2,c_3}]$ is defined by \eq{poset}.
 A direct computation shows that $\oset$ is projected onto $\iset$ by the  finite projection  $[P]=[A]\,[\stpf]\,[B]$, where  $B$ is given by \eq{Bf} and $[\stpf]$ is the standard finite projection \eq{stpf}.
\end{proof}
\begin{theorem} \label{main-affine-camera}{(\sc affine projection criteria.)} 
 A curve $\oset\subset\RR^3$  projects onto a curve $\iset\subset\RR^2$  by an affine projection if and only if there exist  $c_1, c_2\in \RR$   and an  ordered triplet $(i,j,k)\in \left\{(1,2,3),\, (1,3,2), \,(2,3,1)\right\}$ such that the {\em planar}    curve
\beq\label{delta-set}\tilde\oset^{i,j,k}_{c_1,c_2} =\left\{\left(z_i+c_1 \,z_k,\,z_j+c_2\,z_k\right)\,\Big|\,  (z_1,z_2,z_3)\in \oset\right\}\subset\RR^2\eeq
is $\aff(2)$-equivalent to  $\iset\subset \RR^2$.  
\end{theorem}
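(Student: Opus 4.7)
The plan is to mimic the proof of the finite projection criteria, substituting Proposition~\ref{transa} for Proposition~\ref{trans_f} and keeping careful track of the three normal forms of $B$ given in Remark~\ref{B-cases}. The argument will again be a bidirectional equivalence, with the ``hard work'' hidden in the transitivity result already established.

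For the forward direction, assume there exists $[P]\in\ap$ with $[\iset]=[P][\oset]$. By Proposition~\ref{transa} and its proof, we can write $[P]=[A]\,[\stpa]\,[B]$ for some $([A],[B])\in\affp$, where $B$ has one of the three specific forms listed in Remark~\ref{B-cases}, determined by an ordered triplet $(i,j,k)\in\{(1,2,3),(1,3,2),(2,3,1)\}$ and real parameters $c_1,c_2$. I would then compute, by direct matrix multiplication, the composition $[\stpa]\,[B]\,[z_1,z_2,z_3,1]^{\tr}$ in each of the three cases, observing that the last row of $B$ is $(0,0,0,1)$ so the fourth homogeneous coordinate is preserved. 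A case-by-case check shows that the resulting point in $\PP^2$ has affine representative $(z_i+c_1 z_k,\, z_j+c_2 z_k,\,1)$, so that $[\stpa]\,[B]\,[\oset]=[\tilde\oset^{i,j,k}_{c_1,c_2}]$. Consequently $[\iset]=[A]\,[\tilde\oset^{i,j,k}_{c_1,c_2}]$, and since $[A]\in\aff(2)$ this is precisely $\aff(2)$-equivalence of $\iset$ and $\tilde\oset^{i,j,k}_{c_1,c_2}$ in the sense of Definition~\ref{def-equi}.

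For the reverse direction, suppose $c_1,c_2\in\RR$ and $(i,j,k)$ are given so that $\iset$ is $\aff(2)$-equivalent to $\tilde\oset^{i,j,k}_{c_1,c_2}$ via some $[A]\in\aff(2)$. Define $B$ by the corresponding case of Remark~\ref{B-cases}; since $B$ is invertible with last row $(0,0,0,1)$ we have $[B]\in\aff(3)$, so $([A],[B])\in\affp$. Set $[P]=[A]\,[\stpa]\,[B]$; from Definition~\ref{def:cameras} and the fact that $\stpa$ has last row $(0,0,0,1)$ it follows that $[P]\in\ap$. Running the computation of the forward direction in reverse shows $[P][\oset]=[A]\,[\tilde\oset^{i,j,k}_{c_1,c_2}]=[\iset]$, so $\oset$ projects onto $\iset$ by the affine projection $[P]$.

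The only genuine obstacle is bookkeeping in the three sub-cases: I need to verify that each of the three column-permuted shapes of $B$ in Remark~\ref{B-cases} produces exactly the corresponding planar curve in~\eqref{delta-set}, and that the rank-$3$ condition on $P$ in Definition~\ref{def:cameras} is maintained (which follows immediately because $A$ is invertible in $\aff(2)$, $B$ is invertible in $\aff(3)$, and $\stpa$ has rank~$3$). Apart from this routine case-check, the theorem reduces formally to the transitivity of the $\affp$-action on $\ap$ established in Proposition~\ref{transa}, paralleling the role that Proposition~\ref{trans_f} played in Theorem~\ref{main-finite-camera}.
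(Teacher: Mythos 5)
Your proposal is correct and follows essentially the same route as the paper: both directions rest on the decomposition $[P]=[A]\,[\stpa]\,[B]$ from Proposition~\ref{transa}, with $B$ taken from the three normal forms of Remark~\ref{B-cases}, and the case-by-case computation of $[\stpa][B][z_1,z_2,z_3,1]^{\tr}$ yielding $[\tilde\oset^{i,j,k}_{c_1,c_2}]$. The paper's proof is just a terser version of the same argument.
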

\begin{proof}  
($\Rightarrow$)Assume $\oset$ projects onto $\iset$. Then there exists  an affine projection $[P]\in \ap$ such that  $[\iset]=[P][\oset]$. Recall that the matrix $P$ is of the form \eq{aff:proj} and let
$(i,j,k)$ be a permutation of numbers $(1,2,3)$ such that $i<j$ and
the submatrix of $P$  formed by the $i$-th and $j$-th columns has rank 2. 
 As it was established in the proof of Proposition~\ref{transa} there exist   $[A]\in \aff(2)$  and  $[B]\in \aff(3)$, listed in Remark~\ref{B-cases}, such that $[P]=[A]\,[\stpa]\,[B]$, where $[\stpa]$ is the standard projection \eq{stpa}.
 Since $[\stpa][ B] [\oset]=[\tilde\oset^{i,j,k}_{c_1,c_2}]$, then $[\iset] =[A][\tilde\oset^{i,j,k}_{c_1,c_2}]$ 
 and  the direct statement  is proved. 

($\Leftarrow$) To prove the converse direction we assume that there exist $[A]\in \aff(2)$, two real numbers $c_1$ and $c_2$, and a triplet of indices such that $(i,\,j,\,k)\in \left\{(1,2,3),\, (1,3,2),\,(2,3,1)\right\}$, such that $[\iset]=[A] [\tilde\oset^{i,j,k}_{c_1,c_2}]$, where a planar curve $\tilde \oset^{i,j,k}_{c_1,c_2}(s)$ is given by \eq{delta-set}.
Let  $B$  be a matrix listed in Remark~\ref{B-cases}, corresponding to the $(i,j,k)$-triplet. 
A direct computation shows that $\oset$ is projected onto $\iset$ by the affine projection  $[P]=[A][\stpa][B]$. 
\end{proof}
The families of set  $\tilde\oset^{i,j,k}_{c_1,c_2}$ given by \eq{delta-set}  with $(i,\,j,\,k)\in \left\{(1,2,3),\, (1,3,2),\,(2,3,1)\right\}$ and $c_1,c_2\in\RR$ have a large overlap.  The following corollary eliminates this redundancy and, therefore, is useful for practical computations.
\begin{corollary}\label{reduced-aff-camera}{(\sc reduced affine projection criteria)}  A curve $\oset\subset\RR^3$  projects onto $\iset\subset\RR^2$    by an affine (parallel) projection if and only if  there exist $b,c,f\in \RR$ such that the curve $\iset$ is $\aff(2)$-equivalent to one of the following  \emph{planar} curves 
\begin{eqnarray}\label{toset} \tilde\oset&=&\left\{\left(z_2 ,\,z_3\right)\,\Big|\,  (z_1,z_2,z_3)\in \oset\right\}\subset\RR^2,\\ 
\label{tosetb}\tilde\oset_b&=&\left\{\left(z_1+b \,z_2,\,z_3\right)\,\Big|\,  (z_1,z_2,z_3)\in \oset\right\}\subset\RR^2, \\
\label{tosetcf}\tilde\oset_{c,f}(s)&=&\left\{\left(z_1+c \,z_3,\,z_2+f\,z_3\right)\,\Big|\,  (z_1,z_2,z_3)\in \oset\right\}\subset\RR^2.\end{eqnarray}
 \end{corollary}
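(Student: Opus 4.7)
My approach is to derive the corollary directly from Theorem~\ref{main-affine-camera} by showing that each family $\tilde\oset^{i,j,k}_{c_1,c_2}$ appearing in that theorem collapses, up to $\aff(2)$-equivalence, onto one of the three canonical forms \eq{toset}, \eq{tosetb}, \eq{tosetcf}. Since $\aff(2)$-equivalence is transitive, this will yield the $(\Rightarrow)$ direction. The $(\Leftarrow)$ direction is immediate: each of $\tilde\oset$, $\tilde\oset_b$, $\tilde\oset_{c,f}$ is itself an instance of $\tilde\oset^{i,j,k}_{c_1,c_2}$, namely with $(i,j,k,c_1,c_2) = (2,3,1,0,0)$, $(1,3,2,b,0)$, and $(1,2,3,c,f)$ respectively, so Theorem~\ref{main-affine-camera} applies directly.

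For the nontrivial direction I would do a case analysis on $(i,j,k)$. The case $(1,2,3)$ requires no work since $\tilde\oset^{1,2,3}_{c_1,c_2} = \tilde\oset_{c_1,c_2}$ is already in form~\eq{tosetcf}. For the cases $(1,3,2)$ and $(2,3,1)$ I would exhibit explicit elements of $\aff(2)$ --- built from a scaling and a shear, possibly composed with a swap of the two image coordinates --- that absorb the ``extra'' $z$-variable appearing in the image and produce one of the three canonical forms. For instance, in case $(1,3,2)$ with $c_2 \neq 0$, the invertible map $(x,y) \mapsto \bigl(x - (c_1/c_2)\,y,\ y/c_2\bigr)$ sends $\tilde\oset^{1,3,2}_{c_1,c_2}$ to $\tilde\oset_{-c_1/c_2,\,1/c_2}$; when $c_2 = 0$ no transformation is needed and one reads off form~\eq{tosetb} directly.

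The case $(2,3,1)$ splits analogously into the three sub-cases $(c_1,c_2)=(0,0)$, $c_2=0\neq c_1$, and $c_2\neq 0$, and a similar scaling-plus-shear produces forms \eq{toset}, \eq{tosetb}, and \eq{tosetcf} in turn. There is no real conceptual obstacle beyond what is already contained in Theorem~\ref{main-affine-camera}; the only delicate step is bookkeeping --- one must check that the chosen affine map is nonsingular in each sub-case (the denominators appearing in the shear correspond exactly to those $c_i$ assumed nonzero) and that every combination of vanishing parameters is covered at least once.
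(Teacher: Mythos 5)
Your proposal is correct and follows essentially the same route as the paper's own proof: a case analysis on $(i,j,k)$ with explicit nonsingular affine maps (shears and scalings, with a coordinate swap in the $(2,3,1)$ case) absorbing the extra variable, split according to which of $c_1,c_2$ vanish; your map for the $(1,3,2)$, $c_2\neq 0$ sub-case is exactly the matrix the paper uses. Your reverse direction is, if anything, slightly cleaner, since you observe that each canonical form is literally a member of the family \eq{delta-set} rather than merely $\aff(2)$-equivalent to one.
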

\begin{proof}
We first prove that for any permutation $(i,j,k)$ of numbers $(1,2,3)$ such that $i<j$, and for any $c_1,c_2\in \RR$ 
the set $\tilde\oset^{i,j,k}_{c_1,c_2}=\left\{\left(z_i+c_1 \,z_k,\,z_j+c_2\,z_k\right)\,\Big|\,  (z_1,z_2,z_3)\in \oset\right\}$ is $\aff(2)$-equivalent to one of the sets listed in \eq{toset}-\eq{tosetcf}.  

Obviously, $\tilde\oset^{1,2,3}_{c_1,c_2}=\tilde\oset_{c,f}$ with $c=c_1$ and $f=c_2$. 

For $\tilde\oset^{1,3,2}_{c_1,c_2}$, if  $c_2\neq 0$ then $\left(\begin{array}{cc}1&-\frac{c_1}{c_2} \\ 0 &\frac{1}{c_2} \end{array}\right)
\left(\begin{array}{c}z_1+{c_1}z_2\\z_3+c_2{z_2} \end{array}\right)=\left(\begin{array}{c}z_1-\frac{c_1}{c_2}z_3\\z_2+\frac 1 {c_2}{z_3} \end{array}\right)$
and so $\tilde\oset^{1,3,2}_{c_1,c_2}$ is $\aff(2)$-equivalent to $\tilde\oset_{c,f}$ with $c=-\frac{c_1}{c_2}$ and $f=  \frac 1 {c_2}$. Otherwise, if $c_2=0$, the $\tilde\oset^{1,3,2}_{c_1,c_2}=\tilde\oset_b$
with $b=c_1$.

Similarly for $\tilde\oset^{2,3,1}_{c_1,c_2}$,   if $c_2\neq 0$ then $\tilde\oset^{2,3,1}_{c_1,c_2}$ is $\aff(2)$-equivalent to $\tilde\oset_{c,f}$ with $c=\frac{1}{c_2}$ and $f= - \frac {c_1} {c_2}$. Otherwise, if $c_2=0$, then $\tilde\oset^{2,3,1}_{c_1,c_2}(s)=(z_2(s)+c_1z_1(s),\,z_3(s))$. If $c_1\neq 0$ then $\tilde\oset^{2,3,1}_{c_1,c_2}$ is $\aff(2)$-equivalent to $\tilde\oset_b$ with $b=\frac 1{c_1}$, otherwise  $c_1=0$ and $\tilde\oset^{2,3,1}_{c_1,c_2}=\tilde\oset$.

We can  reverse the argument and show that any curve given by \eq{toset}-\eq{tosetcf} is $\aff(2)$-equivalent to a curve from  family \eq{delta-set}.
Then the reduced criteria follows from Theorem~\ref{main-affine-camera}. 
  \end{proof}

\section{Group-equivalence problem}\label{group-equiv}
Theorems~\ref{main-finite-camera} and~\ref{main-affine-camera} reduce the projection  problem to the problem of establishing group-action equivalence between a given curve and a curve from a certain family. A variety of methods exist to solve group-equivalence problem for curves. We base our algorithm on the differential signature construction described in \cite{calabi98} which originates from   Cartan's moving frame method  \cite{C37}. We consider the possibility  of using some other methods in Section~\ref{discussion}. 
\subsection{Differential invariants for planar curves}  In this section we consider rational algebraic  curves, i.~e.~ curves $\gamma (t)=(x(t),y(t))$ defined by a rational map $\gamma\colon \RR\to\RR^2$, defined on $\RR$, with a possible exclusion of  a finite set of points, where the denominators of $x(t)$ or $y(t)$ are zero.\footnote{Throughout the paper, when we make  a statement about a rational map, we assume, without saying so,  that the statement  holds on the domain of the definition of the map.}  By $C_\gamma$ we denote the image of $\gamma$ in $\RR^2$.  The case of non-rational algebraic curves will be considered in \cite{bk-prep}.

An action of a group $G$ on  $\RR^2$ induces an action on curves in $\RR^2$. Using the chain rule, this action can be \emph{prolonged} to the $k$-th order jet space of curves denoted by  $\jet^k$. Variables  $x,y,  \dot x, \dot y, \ddot x,\ddot y,\dots$, which represent  the derivatives of $x,y$ with respect to the parameter of orders from $0$ to $k$, serve as coordinate functions on $\jet^k$.
\begin{definition} \emph{Restriction of  a function} $\f$
 on $\jet^k$ to  a curve $
 \gamma(t)=(x(t),y(t))$, $t\in \RR$, is a single-variable function  $\f|_{\gamma}(t):=\f\left(x(t),y(t),\frac{dx(t)}{dt},\frac{dy(t)}{dt}, \frac{d^2x(t)}{dt^2}, \dots\right)$. 
 
 A function $\f$ on $\jet^k$ is \emph{invariant under reparameterizations} if for all rational curves $\gamma\colon\RR\to\RR^2$ and for all rational maps $\phi\colon\RR\to\RR$, we have $\f|_\gamma(\phi(t))= \f|_{\tilde\gamma}(t)$, where  $\tilde\gamma(t)=\gamma(\phi(t))$.
  \end{definition}
For example, $\dot x|_{\tilde\gamma}(t)=\dot x|_\gamma(\phi(t))\phi'(t)$, and hence $\dot x$ is not invariant under reparameterizations, but $\frac{\dot x}{\dot y}$ is invariant under reparameterizations. 
\begin{definition} 
 A \emph{$k$-th order differential invariant} is a function on $\jet^k$ that depends on $k$-th order jet variables and  is  invariant 
 under the prolonged action of $G$ and reparameterizations of curves.
 \end{definition}
 For example, for  the action of the $3$-dimensional  Euclidean group, consisting of rotations, translations and  reflections  on the plane,   the curvature 
 $\kappa=\frac{\ddot y\dot x-\ddot x\dot y}{\sqrt{\dot x^2+\dot y ^2}}$ is (up to a sign) a lowest order differential  invariant. 
 The sign of $\kappa$ changes when a curve is reflected,  rotated by $\pi$ radians or traced in the opposite direction 
($\kappa^2$ is invariant under the full Euclidean group).  
Higher order differential invariants are obtained by differentiation of curvature with respect to Euclidean  arclength $ds={\sqrt{\dot x^2+\dot y ^2}}\,dt$, i.~e.~$\kappa_s=\frac{d\,\kappa}{d\,s}=\frac 1{\sqrt{\dot x^2+\dot y ^2}}\frac {d\,\kappa}{d\,t}$. Any other Euclidean differential invariant can be locally expressed as a function of $\kappa, \kappa_s,\kappa_{ss},\dots$.

For the majority of Lie group actions on $\RR^2$, a lowest order differential invariant appears at order $r-1$ where $r=\dim G$.
 The group actions on the plane  with this property are called \emph{ordinary}. 
 All actions considered in this paper are ordinary.  A lowest order differential  invariant for an ordinary action of a group $G$ is called \emph{$G$-curvature}, and a lowest order invariant  differential form is called infinitesimal $G$-arclength. Any differential invariant with respect to the $G$-action can be locally expressed as a functions of $G$-curvature and its derivatives with respect to $G$-arclength. Affine and projective curvatures and infinitesimal arclengths are well known, and can be expressed in terms of Euclidean invariants \cite{faugeras94, kogan03}.
 
 In particular, $\saff$-curvature
 $\mu$ and infinitesimal $\saff$-arclength $d\alpha$ are expressed in terms of their Euclidean counterparts  as follows: 
\beq\label{affc}\mu=\frac{3\,\kappa\,(\kappa_{ss}+3\,\kappa^3)-5\,\kappa_s^2}{9\,\kappa^{8/3}},\quad d\alpha=\kappa^{1/3}ds.\eeq
 $\saff(2)$-curvature has the differential order $4$.  Any $\saff$-differential invariant can be locally expressed as a function of $\mu$ and its derivatives with respect to the $\saff$-arclength: $\mu_\alpha=\frac {d\mu}{d\alpha},\, \mu_{\alpha\alpha}= \frac {d\mu_\alpha}{d\alpha},\,\dots$.
 $\saff$-curvature is undefined for straight lines ($\kappa=0$) and $\frac d{d\alpha}$ is undefined at the inflection points of a curve. 
It is shown, for instance, in \cite{Gug63} that $\mu|_\gamma$  is constant if and only if $C_\gamma$ is a conic. Moreover, $\mu|_\gamma\equiv 0$ if and only if $C_\gamma$ is a parabola,  $\mu|_\gamma$ is a positive constant 
 if and only if $C_\gamma$ is  an ellipse, and  $\mu|_\gamma$ is a negative constant if and only if $C_\gamma$ is  a hyperbola.
  
  By considering the  effects of scalings and reflections   on $\saff(2)$-invariants, we obtain two lowest order $\aff(2)$-invariants that are \emph{rational functions} in jet variables:
\beq\label{aff-inv}J_a=\frac{(\mu_\alpha)^2}{\mu^3},\quad K_a= \frac{\mu_{\alpha\alpha}}{3\,\mu^2}.\eeq

$\pgl(3)$-curvature  $\eta$ and infinitesimal arclength $d\rho$ are expressed in terms of their $\saff$-counterparts: 
\beq\label{pc}\eta=\frac{6\mu_{\alpha\alpha\alpha}\mu_\alpha-7\,\mu_{\alpha\alpha}^2-9\mu_\alpha^2\,\mu}{6\mu_\alpha^{8/3}},\quad d\rho=\mu_\alpha^{1/3}d\alpha.\eeq
The two lowest order \emph{rational} $\pgl(3)$-invariants
\beq\label{proj-inv}J_p=\eta^3,\quad K_p= \eta_\rho.\eeq
are of differential order 7 and 8, respectively.

 \begin{definition}\label{G-excep} A curve $\gamma$ is called  {\em $\aff(2)$-exceptional} if invariants \eq{aff-inv} are undefined on a one-dimensional subset of $C_\gamma$. Equivalently
 $C_\gamma$ is   a straight line or   a parabola. In the former case its Euclidean curvature $\kappa|_\gamma\equiv0$, while in the latter case  its $\saff$-curvature $\mu|_\gamma\equiv0$.
 
 A curve $\gamma$ is called  {\em$\pgl(3)$-exceptional} if invariants \eq{proj-inv} are undefined on a one-dimensional subset of $C_\gamma$. Equivalently, 
 $C_\gamma$ is a straight line or  a conic. In the latter  case  $\mu|_\gamma$ is a  constant.
 \end{definition}

\subsection{Differential signature for planar curves}
  
Following \cite{calabi98} we will use differential signatures to solve the equivalence problem for curves under a group action.
\begin{definition}
Let $J_G$ and $K_G$ be differential invariants of  orders $r-1$ and $r$, respectively,  for an ordinary action of an $r$-dimensional Lie group $G$ on the plane. A \emph{$G$-signature} of a non-exceptional parametric curve $\gamma(t)=(x(t),y(t))$, $t\in\RR$, is a parametric curve $\sig_\gamma(t)=\big(J_G|_\gamma(t), {K_G}|_\gamma(t)\big)$.

 \end{definition}
We note that a signature $S_\gamma\colon \RR\to\RR^2$ of a rational curve $\gamma(t)$, which is defined using rational $G$-invariants, such as given by \eq{aff-inv} or \eq{proj-inv}, is again a rational curve $\RR\to\RR^2$.  In a {\em degenerate} case 
the image of $S_\gamma$ consists of a single point in $\RR^2$: 
$$\exists (j,\, k)\in \RR^2, \mbox{ such that } J_G|_\gamma(t)\equiv j, \quad {K_G}|_\gamma(t)\equiv k,\quad \forall t\in \RR.$$
Curves with  degenerate  signatures are symmetric with respect to a  one-dimesional subgroup of  $G$. For example,  circles  and lines  have constant Euclidean signatures. A circle is symmetric under rotations about its center and a line is symmetric under translations along itself.
 
  It follows from the definition of invariants that the image $\sigs_{\gamma}:=\{\sig_\gamma(t)|t\in \RR\}$ is invariant under reparametrizations  of  the curve $\gamma$
and that the following theorem holds:
\begin{theorem} \label{eq-sig}If two non $G$-exceptional planar rational  curves  $C_\alpha$ and $C_\beta$ are $G$-equivalent then  the images of their $G$-signatures coincide: $\sigs_{\alpha}=\sigs_{\beta}$.
\end{theorem}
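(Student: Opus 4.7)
The plan is to split the proof into two invariance arguments, one for each defining property of the signature: invariance of $J_G, K_G$ under the prolonged $G$-action, and invariance under reparametrization. The bridge between them is an auxiliary parametrization of $C_\beta$ obtained from $\alpha$ by applying the $G$-transformation realizing the equivalence.

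Concretely, I would fix some $[A]\in G$ realizing $C_\beta = [A]\cdot C_\alpha$ and set $\tilde\alpha(t) := [A]\cdot\alpha(t)$, which is a rational parametrization of $C_\beta$. For the first step, because $J_G$ and $K_G$ are $G$-invariants on the relevant jet bundle, the action of $[A]$ on the curve leaves their restricted values unchanged, so $J_G|_{\tilde\alpha}(t) = J_G|_\alpha(t)$ and $K_G|_{\tilde\alpha}(t) = K_G|_\alpha(t)$; hence $\sig_{\tilde\alpha} = \sig_\alpha$ as maps and in particular $\sigs_{\tilde\alpha} = \sigs_\alpha$. For the second step I would compare $\tilde\alpha$ and $\beta$, both of which are rational parametrizations of the same rational curve $C_\beta$. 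By L\"uroth's theorem they factor through a common normalizing parametrization $\gamma_0\colon\RR\to C_\beta$ via non-constant rational self-maps $u,v$ of $\RR$, so $\tilde\alpha = \gamma_0\circ u$ and $\beta = \gamma_0\circ v$. Invariance under reparametrization then yields $\sig_{\tilde\alpha}(t) = \sig_{\gamma_0}(u(t))$ and $\sig_\beta(t) = \sig_{\gamma_0}(v(t))$ on the points of common definition, and since non-constant rational self-maps of $\RR$ have cofinite image, both $\sigs_{\tilde\alpha}$ and $\sigs_\beta$ coincide with $\sigs_{\gamma_0}$ up to finitely many points. Chaining with step one yields $\sigs_\alpha = \sigs_\beta$.

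The main obstacle is justifying the second step cleanly: one must know that any two rational parametrizations of the same irreducible rational curve factor through a common normalization by rational reparametrizations, and that these reparametrizations are surjective enough that the set-equality of signature images (not merely inclusion) is actually obtained. The non $G$-exceptional hypothesis is used essentially here, as it guarantees that $J_G$ and $K_G$ are defined on a cofinite subset of each parameter line so that $\sig_\alpha$, $\sig_\beta$ and $\sig_{\gamma_0}$ are honest rational maps to $\RR^2$ with well-defined images and no degenerations on a one-dimensional subset. Apart from this essentially algebraic-geometric technicality the argument is a direct and formal chaining of the two invariance properties that went into the definition of the signature.
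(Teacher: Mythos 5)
Your overall strategy --- factoring the claim into (i) invariance of $J_G,K_G$ under the prolonged $G$-action and (ii) invariance under reparametrization, bridged by the auxiliary parametrization $\tilde\alpha=[A]\cdot\alpha$ of $C_\beta$ --- is exactly the route the paper intends; in fact the paper gives no proof beyond the remark that ``it follows from the definition of invariants that the image $\sigs_\gamma$ is invariant under reparametrizations,'' so your write-up is strictly more detailed than the original. Your first step is correct as stated: invariance under the prolonged action gives $\sig_{\tilde\alpha}=\sig_\alpha$ pointwise in $t$.

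There is, however, a genuine gap in your second step. The claim that a non-constant rational self-map of $\RR$ has cofinite image is false: $u(t)=t^2$ has image $[0,\infty)$, and $u(t)=2t/(1+t^2)$ has image $[-1,1]$. A rational map $\RR\to\RR$ has semialgebraic image (a finite union of points and intervals), but nothing forces it to miss only finitely many points. Consequently $\sigs_{\tilde\alpha}=\sig_{\gamma_0}(u(\RR))$ and $\sigs_\beta=\sig_{\gamma_0}(v(\RR))$ could a priori differ along whole arcs, not merely at finitely many points, so your chain does not close; note also that even ``equal up to finitely many points'' would fall short of the asserted set equality $\sigs_\alpha=\sigs_\beta$. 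This is not a pedantic worry: Definition~\ref{def-equi} phrases $G$-equivalence through the projective closures $[\iset_i]$, so $\tilde\alpha$ and $\beta$ are only guaranteed to have the same Zariski closure --- compare $(t^2,t^6+t^2)$ with $(t,t^3+t)$, whose real images, and hence signature images, genuinely differ. To make the step rigorous one must either assume the parametrizations are proper and surjective onto the real points of the curve, or compare Zariski closures of the signature images rather than the images themselves; the paper silently elides this issue here and defers the careful real-algebraic analysis to \cite{bk-prep} in connection with Theorem~\ref{sig-eq-strong}. You correctly identified this as the main obstacle; the Lüroth factorization is the right tool, but the surjectivity assertion you use to finish is the one point that is actually wrong.
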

 Theorem~\ref{eq-sig} is valid not only for rational curves, but for all classes of curves to which the definition of signature can be reasonably adapted, and, in particular, for curves with arbitrary smooth parameterizations.  Examples in \cite{musso09} suggest that one has to be careful when stating the converse
 of this theorem for arbitrary smooth curves. {Theorem~8.53} of \cite{olver::inv}  shows that  the converse is true for curves $y=f(x)$ where $f\colon\RR\to\RR$ is an analytic function. In \cite{bk-prep} we show that this proof can be adapted to the case of rational algebraic curves and obtain the following result:

 \begin{theorem} \label{sig-eq-strong} Two non $G$-exceptional planar rational  curves  $C_\alpha$ and $C_\beta$ are $G$-equivalent if and only if their $G$-signatures coincide: $\sigs_{\alpha}=\sigs_{\beta}$.  
 
 A $G$-exceptional curve is not $G$-equivalent to any of non $G$-exceptional curves.
\end{theorem}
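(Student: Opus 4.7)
The forward direction of the biconditional is already Theorem~\ref{eq-sig}, so the work lies in proving the converse for rational curves and in establishing the exceptional/non-exceptional dichotomy. My plan is to adapt the analytic proof of Olver's Theorem~8.53, exploiting the fact that a rational map is real-analytic on the complement of a finite set in $\RR$, and then to upgrade the resulting local $G$-equivalence to a global one using the algebraic structure of the curves involved.

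I would first dispatch the exceptional statement. Being $G$-exceptional is a property of the image $C_\gamma$ — it means $C_\gamma$ is a line (all groups considered) or additionally a conic (for $\pgl(3)$) or a parabola (for $\saff(2)$ in the restricted sense) — and each of these classes is preserved under the action of $G$ on $\RR^2$, since affine and projective transformations send lines to lines and conics to conics. Consequently, if $C_\alpha$ is $G$-exceptional and $C_\beta$ is not, no element of $G$ can map $C_\alpha$ onto $C_\beta$. This handles the last sentence of the theorem.

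For the converse in the non-exceptional case, assume $\sigs_\alpha = \sigs_\beta$. Since $\alpha$ is non-exceptional, $J_G|_\alpha$ is not identically constant, so $\sig_\alpha$ is a non-constant rational map into $\RR^2$, and there exists an open interval $I \subset \RR$ on which $\sig_\alpha$ is an immersion onto a real-analytic arc in $\sigs_\alpha = \sigs_\beta$. On a corresponding interval for $\beta$, the implicit function theorem (applied in the rational-analytic category) produces a rational reparameterization $\phi$ with
\[
J_G|_\alpha(t) = J_G|_\beta(\phi(t)), \qquad K_G|_\alpha(t) = K_G|_\beta(\phi(t)),
\qquad t \in I.
\]
Because $K_G$ is obtained from $J_G$ by differentiation with respect to $G$-arclength and every higher $G$-differential invariant is a rational function of $J_G$, $K_G$, and their $G$-arclength derivatives, the two curves share the same $G$-curvature expressed as a function of $G$-arclength on matched parameters. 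Cartan's classical moving frame theorem then yields an element $g \in G$ such that $g \cdot \alpha(t) = \beta(\phi(t))$ for all $t$ in an open subset $I' \subset I$.

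Finally, I would promote this local equivalence to a global one. The set $g \cdot C_\alpha$ and $C_\beta$ are Zariski closures of rational images in $\RR^2$, hence irreducible real algebraic curves, and they agree on the open subset $\{g \cdot \alpha(t) \mid t \in I'\}$, which is infinite; by irreducibility they must coincide as algebraic sets, so $C_\beta = g \cdot C_\alpha$. The hardest step is the immersion/reparameterization argument at the start of paragraph three: one must ensure that the signature map is not pathological (e.g., has image a point or traces $\sigs$ with wildly different multiplicities for $\alpha$ and $\beta$), and the non-exceptional hypothesis is precisely what is needed to rule these pathologies out, since $J_G$ and $K_G$ being non-constant rational functions forces the signature to be a genuine analytic arc away from a finite bad set.
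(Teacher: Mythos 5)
First, note that the paper does not actually prove Theorem~\ref{sig-eq-strong} in the text: the forward direction is Theorem~\ref{eq-sig}, and for the converse the authors only indicate that the proof of Theorem~8.53 of \cite{olver::inv} (the analytic case) can be adapted to rational curves, deferring the details to \cite{bk-prep}. Your proposal follows exactly that announced route --- exploit real-analyticity of rational maps off a finite set, run a local Cartan/moving-frame argument, and globalize via irreducibility of the Zariski closures (which is the right notion here, since $G$-equivalence in Definition~\ref{def-equi} is formulated at the level of the projective closures $[\iset]$). Your handling of the exceptional dichotomy is also fine: lines, conics and parabolas are each preserved by the relevant groups, so an exceptional curve cannot be equivalent to a non-exceptional one.

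There is, however, a genuine gap in the step you yourself flag as the hardest. You assert that non-exceptionality of $\alpha$ forces $J_G|_\alpha$ to be non-constant, so that the signature is a genuine analytic arc. That is false, and the paper itself says so: a non-exceptional curve may have a \emph{degenerate} signature whose image is a single point (these are the curves symmetric under a one-parameter subgroup of $G$). Example~\ref{ex-finite} exhibits $\gamma_2(t)=\bigl(t^3/(t+1),\,t^2/(t+1)\bigr)$, which is not $\pgl(3)$-exceptional yet has constant $J_p$ and $K_p$; the curve $\gamma_3$ there and $\beta_0$ in Example~\ref{ex1} are further instances. For such curves the hypothesis $\sigs_\alpha=\sigs_\beta$ only says that two points of $\RR^2$ coincide, the immersion/implicit-function-theorem step has nothing to invert, and Cartan's theorem cannot be invoked in the form you use. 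This case must be treated separately --- for instance by showing that a rational curve with constant $(J_G,K_G)$ is an arc of an orbit of a one-parameter subgroup of $G$ and that two such orbits with the same constant invariant values are $G$-equivalent --- and it is not a removable formality, since it is precisely the situation arising in the paper's worked examples.
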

\begin{remark}\label{implicit}
 Signature construction reduces the problem of $G$-equivalence of rational algebraic  curves to the problem of deciding whether two  rational maps from $\RR$ to $\RR^2$ (that represent the signatures of the given curves)  have the same images. The implicit equation $\impsig_\gamma(K,J)=0$ for the signature curve can be computed by an elimination algorithm as outlined, for instance, in Section 3.3 of \cite{CLO96}. When comparing signatures using their implicit equations, one has to be aware that, since   $\RR$ is not an algebraically closed field, two non overlapping signature curves can have the same implicit equation as shown by Example~8.69 in \cite{olver::inv}. \end{remark}
\section{Algorithms and Examples}\label{algorithms}

In this section, we outline the algorithms for solving projection problems  based on a combination of  the projection criteria of Section~\ref{criteria} and  the group equivalence criterion of Section~\ref{group-equiv}. The detailed algorithms, which  also cover $G$-exceptional curves, and  their preliminary {\sc Maple} implementation are posted on   \url{www.math.ncsu.edu/~iakogan/symbolic/projections.html}. We illustrate the algorithms by several examples. Additional examples can be found  at the above link.

\subsection{Finite projections.}
The following algorithm is based on the finite projection criteria stated in Theorem~\ref{main-finite-camera}. 
\begin{algorithm}\label{alg-finite}{\sc (Outline for finite projections.)}
\vskip2mm
\noindent INPUT:  a planar  curve $\gamma(t)=\left(x(t),\,y(t)\right)$, $t\in\RR$, and a 
 spatial curve  $\Gamma(t)=\big(z_1(s),z_2(s),z_3(s)\big),$ $s\in\RR$, with rational parameterizations. 
\vskip2mm

\noindent OUTPUT:  YES or NO answer to the question "Does there exist a finite projection $[P]$, such that $[C_\gamma]=[P][C_\Gamma]$  is satisfied?".
\vskip2mm
\noindent STEPS: 
\begin{enumerate}
\item  if $\gamma$ is $\pgl(3)$-exceptional (a straight line or a conic) then follow a special procedure,  else
\item evaluate  $\pgl(3)$-invariants given by \eq{proj-inv} on $\gamma(t)$. The result  consists  of two rational functions  $J_p|_\gamma(t)$ and $K_p|_\gamma(t)$ of $t$;
%
\item for arbitrary  $c_1,c_2,c_3\in\RR$ define a curve   $\epsilon_{c_1,c_2,c_3}(s)=\left(  \frac {z_1(s)+c_1} {z_3(s)+c_3},\, \frac {z_2(s)+c_2} {z_3(s)+c_3} \right)$;

\item evaluate   $\pgl(3)$-invariants given by \eq{proj-inv} on $\epsilon_{c_1,c_2,c_3}(s)$ -- obtain two rational functions $J_p|_\epsilon(c_1,c_2,c_3, s)$ and  $K_p|_\epsilon(c_1,c_2,c_3,s)$ of  $c_1,c_2,c_3$ and $s$;

\item    if $\exists c_1,c_2,c_3\in \RR$, such that $\forall t\in\RR$, where denominators of   $J_p|_\gamma(t)$ and $K_p|_\gamma(t)$ are non-zero, $\exists s\in \RR$:
\beq\nonumber J_p|_\epsilon(c_1,c_2,c_3,\,s)=J_p|_\gamma(t)\mbox{ and }  K_p|_\epsilon(c_1,c_2,c_3,\,s)=K_p|_\gamma(t),\eeq
then OUTPUT: { \sc YES}, else OUTPUT: { \sc NO}.

\end{enumerate}\end{algorithm}

If the output is YES then, in many cases, we can, in addition to establishing the existence of
$c_1,c_2,c_3$ in Step 8 of the algorithm,  find at least one of such triplets explicitly. We then know that  $C_\Gamma$ can be projected to $C_\gamma$ by a projection  centered at $(-c_1,-c_2,-c_3)$. 

We can also, in many cases, determine explicitly a transformation $[A]\in \pgl(3)$ that maps $C_\gamma$ to $C_{\epsilon_{c_1,c_2,c_3}}$. We then know that  $C_\Gamma$ can be projected to $C_\gamma$ by the projection  $[P]=[A][\stpf][B]$, where 
$\stpf$ is defined by \eq{stpf} and $B$ is defined by \eq{Bf}.
 \begin{example}\label{ex-finite}
We would like to decide if the spatial curve 
\begin{equation} \Gamma(s)=\left(z_{1}(s),\,z_{2}(s),
z_{3}(s)  \right) = 
\left( s^3,\,
s^2,\,s  \right),\,s\in\RR
\end{equation}
projects onto any of three given planar curves for $t\in\RR$:
\begin{eqnarray*} \gamma_{1}(t)&=&\left(t^2\, , \, t \right),\, \\
\gamma_{2}(t)&=&\left(\frac{t^3}{t+1} \, , \, \frac {t^2} {t+1} \right), \, \\
\gamma_{3}(t)&=&\left(t, \, t^5\right).
\end{eqnarray*}
For $c_1,c_2,c_3\in\RR$ we define a curve
 \beq\label{eps-ex}\epsilon_{c_1,c_2,c_3}(s)=\left(  \frac {s^3+c_1} {s+c_3},\, \frac {s^2+c_2} {s+c_3} \right).\eeq
 
Since parabola  $\gamma_1(t)$ is $\pgl(3)$-exceptional, its $\pgl(3)$-signature is undefined. It is known that all planar conics are  $\pgl(3)$-equivalent and so, from Theorem~\ref{main-finite-camera}, we know that  $C_\Gamma$ can be projected to $C_{\gamma_1}$  if there exist $c_1,c_2,c_3\in\RR$, such that
the curve defined by \eq{eps-ex} is a conic. This is obviously true for $c_1=c_2=c_3=0$.  Indeed, on can check that $C_\Gamma$ can be projected to $C_{\gamma_1}$ by the standard finite projection \eq{stpf}.

The curve   $\gamma_2(t)$ is not $\pgl(3)$-exceptional, but has a degenerate signature:
$$ J_p|_{\gamma_2}(t)\equiv\frac{250047}{12800}\mbox{ and } J_p|_{\gamma_2}(t)\equiv0, \quad \forall t\in\RR.$$
Following Algorithm~\ref{alg-finite}, we need to decide  whether there exist $c_1,c_2,c_3\in\RR$, such that the restriction of invariants \eq{proj-inv} to the curve  defined by \eq{eps-ex} have the same values $J_p|_{\epsilon}(s)=\frac{250047}{12800}\mbox{ and } J_p|_{\epsilon}(s)=0,\,\forall s\in\RR$. This is, indeed, true for  $c_1=c_2=0$  and $c_3=1$. We can check that $C_\Gamma$ can be projected to $C_{\gamma_2}$ by the a finite projection
\beq\nonumber  P:=\left[
\begin{array}{cccc}
1 & 0 & 0 & 0 \\
0& 1 & 0 & 0 \\
0 & 0 & 1 & 1 \end{array} \right].\eeq
\end{example}
It is important to observe that, although $C_\Gamma$ can be projected to both  $C_{\gamma_1}$  and  $C_{\gamma_2}$, the last two curves are {\em not}
  $\pgl(3)$-equivalent. This underscores an observation made after Proposition~\ref{proj-classes}. 
  
  The  curve   $\gamma_3(t)$ also  has a degenerate signature:
$$ J_p|_{\gamma_3}(t)\equiv\frac{1029}{128}\mbox{ and } J_p|_{\gamma_3}(t)\equiv 0.$$
Following Algorithm~\ref{alg-finite}, we need to decide  whether there exist $c_1,c_2,c_3\in\RR$ such that restriction of invariants \eq{proj-inv} to the curve  defined by \eq{eps-ex} have the same values $J_p|_{\epsilon}(s)=\frac{1029}{128}\mbox{ and } J_p|_{\epsilon}(s)=0,\,\forall s\in\RR$. Substitution of several values of $s$, yields a system of polynomial equations for  $c_1,c_2,c_3\in\RR$ that has no solutions. We conclude that there is no finite projection  from $C_\Gamma$ to $C_{\gamma_3}$.


 \subsection{Affine  projections}

The following algorithm is based on the reduced affine projection criteria stated in Corollary~\ref{reduced-aff-camera}.  
\begin{algorithm}\label{alg-affine}{\sc (Outline for affine projections.)}
\vskip2mm
\noindent INPUT:  a planar  curve $\gamma(t)=\left(x(t),\,y(t)\right)$, $t\in\RR$, and a 
 spatial curve  $\Gamma(t)=\big(z_1(s),z_2(s),z_3(s)\big),$ $s\in\RR$, with rational parameterizations. 
\vskip2mm
\noindent OUTPUT:  YES or NO answer to the question "Does there exist an affine projection $[P]$, such that $[C_\gamma]=[P][C_\Gamma]$  is satisfied?".
\vskip2mm
\noindent STEPS: 
\begin{enumerate}
\item  if $\gamma$ is $\aff(2)$-exceptional (a straight line or a parabola) then follow a special procedure,  else
\item evaluate  $\aff(2)$-invariants  given by \eq{aff-inv} on $\gamma(t)$. 
 The result  consists  of two rational functions $J_a|_\gamma(t)$ and $K_a|_\gamma(t)$ of $t$;
 \item define a curve $\alpha(s)=(z_2(s), z_3(s))$;
 \item  evaluate   $\aff(2)$-invariants given by \eq{aff-inv} on $\alpha(s)$ -- obtain two rational functions $J_a|_\alpha(s)$ and  $K_a|_\alpha(s)$ of   $s$;
\item  if $\forall t\in\RR$, where denominators of   $J_a|_\gamma(t)$ and $K_a|_\gamma(t)$ are non-zero, $\exists s\in \RR$:
\beq\nonumber J_a|_\alpha(s)=J_a|_\gamma(t)\mbox{ and }  K_a|_\alpha(s)=K_a|_\gamma(t),\eeq
then OUTPUT: { \sc YES} and exit the procedure, else
\item for arbitrary  $b\in\RR$ define a curve   $\beta_{b}(s)=\left(  {z_1(s)+b\,z_2(s)} ,\, z_3(s) \right)$;
\item evaluate   $\aff(2)$-invariants given by \eq{aff-inv} on $\beta_{b}(s)$ -- obtain two rational functions $J_a|_\beta(b, s)$ and  $K_a|_\beta(b,s)$ of  $b$ and $s$;
\item  if $\exists$ $b\in\RR$, such that $\forall t\in\RR$, where denominators of   $J_a|_\gamma(t)$ and $K_a|_\gamma(t)$ are non-zero, $\exists s\in \RR$:
\beq \nonumber J_a|_\beta(b, s)=J_a|_\gamma(t)\mbox{ and }  K_a|_\beta(b,s)=K_a|_\gamma(t),\eeq
then OUTPUT: { \sc YES} and exit the procedure, else
 \item for arbitrary  $c,\,f \in\RR$ define a curve   $\delta_{c,f}(s)=\left(   {z_1(s)+c\,z_3(s)} ,\,z_2+f\, z_3(s) \right)$;
\item evaluate   $\aff(2)$-invariants given by \eq{aff-inv} on $\delta_{c,f}(s)$ -- obtain two rational functions $J_a|_\delta(c,f, s)$ and  $K_a|_\delta(c,f,s)$ of  $c,f$ and $s$;
\item  if $\exists$ $c,f\in\RR$ , such that $\forall t\in\RR$, where denominators of   $J_a|_\gamma(t)$ and $K_a|_\gamma(t)$ are non-zero, $\exists s\in \RR$:
\beq \nonumber J_a|_\delta(c,f, s)=J_a|_\gamma(t)\mbox{ and }  K_a|_\delta(c,f,s)=K_a|_\gamma(t),\eeq
then OUTPUT: { \sc YES} else OUTPUT: { \sc NO}.
\end{enumerate}\end{algorithm}
Although the algorithm for affine projections includes more steps then its finite projection counterpart, it is computationally less challenging. If the output is YES then, in many cases, we can find an affine projection explicitly.
\begin{example}\label{ex1}
In order to decide whether the spatial curve
\begin{equation*} \Gamma(s)=\left(z_{1}(s),\,z_{2}(s),\,z_{3}(s) \right) = 
\left( s^4+1,\, s^2, \,s \right),\,s\in\RR,
\end{equation*}
can be projected onto
$ \gamma(t)=\left( t \, , \, t^4+t^2 \right),\,t\in\RR
$
by an affine projection, we start by determining that  $\gamma$ is not an $\aff(2)$-exceptional curve (neither a straight line or a parabola). The curve $\gamma$ has non-constant $\aff(2)$-invariants \eq{aff-inv} that satisfy  the following implicit signature equation:
\beq \label{s1} -448\,J^2+(3780\,K+14525)\,J+245\,K^3+40000 -6000\,K-1575\,K^2 = 0
\eeq
Following Algorithm~\ref{alg-affine}, we first check whether $\gamma(t)$ is $\aff(2)$-equivalent to 
$\alpha(s)=\left(z_2(s),\,z_3(s)\right)=\left( s^2, \,s \right)$. The answer is no, since $\alpha(s)$ is an $\aff(2)$-exceptional curve (parabola)  and $\gamma(t)$ is not $\aff(2)$-exceptional. We next check   whether there exists $b\in\RR$ such that $\gamma(t)$ is $\aff(2)$-equivalent to 
$\beta_b(s)=\big(\,z_1(s)+b\,z_2(s),\,z_3(s)\big)=\big( s^4+1+b\,s^2,\,s\big)$.  
We evaluate invariants \eq{aff-inv} on $\beta_b(s)$: 
\begin{eqnarray}\label{j1} J_a|_{\beta_b}(s)=\frac{100\,s^2\,(3\,b-14s^2 )^2}{(b-14s^2 )^3},
\\ \label{k1}  K_a|_{\beta_b}(s)=\frac{-5\,(140\,s^4 -56\,b\,s^2+b^2)}{(b-14\,s^2)^2}.\end{eqnarray}
When $b=0$ the invariants are constant: $J_a|_{\beta_0}(s)\equiv -50/7$ and $K_a|_{\beta_0}(s)\equiv -25/7$,
and, therefore, ${\beta_0}(s)$ is not $\aff(2)$-equivalent to $\gamma(t)$. For all $b\neq 0$ the invariants
\eq{j1} and \eq{k1} are non-constant   and satisfy the signature equation \eq{s1}.

This provides a necessary condition and a strong  indication that   $\gamma(t)$ is $\aff(2)$-equivalent to $\beta_{b}(s)$ for $b\neq 0$. For $b=1$ this  $\aff(2)$-equivalence is obvious, and  hence  $\Gamma(s)$ projects onto $\gamma(t)$ by an affine projection.
\end{example}

\begin{example}\label{ex2}
We would like to decide if the spatial curve 
\begin{equation} \Gamma(s)=\left(z_{1}(s),\,z_{2}(s),
z_{3}(s)  \right) = 
\left( s^2+s,\,
s^3-3\,s^2 ,\,
s^4  \right),\,s\in\RR
\end{equation}
projects onto any of three given planar curves for $t\in\RR$:
\begin{eqnarray*} \gamma_{1}(t)&=&\left(t^4+t\, , \, t^2 \right),\, \\
\gamma_{2}(t)&=&\left(t^3-t \, , \, t^3+t^2 \right), \ \\
\gamma_{3}(t)&=&\left(t/{(1+t^3)}, \, t^2/{(1+t^3)}\right) \mbox{(Folium of Descartes).}
\end{eqnarray*}
None of given $\gamma$'s is $\aff(2)$-exceptional and the implicit  equations of their $\aff(2)$-signatures are given, respectively, by: 
\begin{small}  
\begin{eqnarray}\label{sig1} &\hat S_{\gamma_1}:& (165+75\,K)\,J-448-560\,K-175\,K^2 = 0,\\
\label{sig2} & \hat S_{\gamma_2}:& 9261\,J^2-(26460\,K+132300)\,J+160\,K^3+160000+264000\,K+12900\,K^2=0,\\
\label{sig3}&\hat S_{\gamma_3}:& 10\, K+1 =0.
\end{eqnarray}
\end{small}  
Following Algorithm~\ref{alg-affine}, we establish
that $\alpha(s)=\big(z_2(s),z_3(s)\big)$ and  $\beta_b(s)=\big(\,z_1(s)+b\,z_2(s),\,z_3(s)\big)$, for all $b\in\RR$, are not $\aff(2)$-equivalent to either of $\gamma$'s. 

We then establish   that $\delta_{c,f}(s)=\big(z_1(s)+c \,z_3(s),\,z_2(s)+f\,z_3(s)\big)$ is $\aff(2)$-equivalent to
$\gamma_{1}$ when $c=0$ and $f=1/2$ and is $\aff(2)$-equivalent to
$\gamma_{2}$ when $c=0$ and $f=0$, but there are no real values of $f$ and $c$ such that  $\delta_{c,f}(s)$ and $\gamma_3$ are 
 $\aff(2)$-equivalent.

We conclude that there are affine projections of  $\Gamma(s)$ onto both $\gamma_1(t)$ and $\gamma_2(t)$, but not onto $\gamma_3(t)$.  

We note that, although $\Gamma(s)$ affinely projects to both $\gamma_1(t)$ and $\gamma_2(t)$, the curves   $\gamma_1(t)$ and $\gamma_2(t)$ are not $\aff(2)$-equivalent because their signatures have different implicit equations   \eq{sig1} and \eq{sig2}.
This illustrates that the converse to Proposition~\ref{proj-classes} does not hold.

\end{example}

\section{Projection  of finite ordered sets (lists) of points}\label{finite}
In  \cite{stiller06, stiller07}, the authors present a solution to the problem  of deciding whether or not there exists an affine projection of a list $Z=(\z^1,\dots, \z^m)$ of $m$ points 
 in $\RR^3$ to a list  $X=(\x^1,\dots, \x^m)$  of  $m$ points in $\RR^2$, without finding a projection explicitly. 
 They identify  the lists  $Z$ and $X$ with the elements  of certain Grassmanian   spaces and use  Pl\"uker embedding  of Grassmanians into projective spaces  to explicitly define the  algebraic variety that characterizes pairs of sets related by an affine projection.
 
We indicate here  how our approach  leads to an alternative solution for the projection problem for lists of  points.  Details  of this adaptation  appear in the dissertation \cite{burdis10} and in an upcoming paper \cite{bk-prep}. 
%
 \begin{theorem} \label{main-finite-points} {(\sc finite projection criteria for lists of points.)} 
  A given list $Z=(\z^1,\dots, \z^m)$ of $m$ points 
 in $\RR^3$ with coordinates $\z^i=(z^r_1,z^r_2,z^r_3)$, $r=1\dots m$ projects onto a given list $X=(\x^1,\dots, \x^m)$  of  $m$ points in $\RR^2$  with coordinates $\x^r=(x^r,y^r)$ by a finite  projection
  if and only if there exist  $c_1, c_2, c_3 \in \RR$  and $[A]\in\pgl(3)$, such that 
  \beq \label{points-finite} [x^r,y^r,1]^{tr}=[A] [ z_1^r+c_1, \, z_2^r+c_2,\, z_3^r+c_3]^{tr}\mbox{ for }r=1\dots m.\eeq
 \end{theorem}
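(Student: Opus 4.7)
The plan is to mirror, almost verbatim, the argument used for Theorem~\ref{main-finite-camera}, since the statement is the pointwise analogue of the criterion for curves and the entire mechanism rests on the factorization of finite projections established in Proposition~\ref{trans_f}. The only real content is that the standard finite projection $[\stpf]$ composed with a translation of $\RR^3$ sends a point $(z_1,z_2,z_3)$ to $[z_1+c_1,\,z_2+c_2,\,z_3+c_3]^{\tr}\in\PP^2$; everything else is a direct rewriting of definitions.

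For the forward direction, I assume that there exists a finite projection $[P]\in\fp$ with $[x^r,y^r,1]^{\tr}=[P]\,[z_1^r,z_2^r,z_3^r,1]^{\tr}$ for all $r=1,\dots,m$. By the proof of Proposition~\ref{trans_f}, we can write $[P]=[A]\,[\stpf]\,[B]$ where $[A]\in\pgl(3)$ and $B$ is the matrix \eq{Bf} determined by some $c_1,c_2,c_3\in\RR$ (those constants are the coordinates of the camera center expressed through the columns of $P$). A direct matrix computation gives
\[
[\stpf]\,[B]\,[z_1^r,\,z_2^r,\,z_3^r,\,1]^{\tr}=[z_1^r+c_1,\,z_2^r+c_2,\,z_3^r+c_3]^{\tr},
\]
so substituting yields exactly \eq{points-finite}.

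For the converse, suppose $c_1,c_2,c_3\in\RR$ and $[A]\in\pgl(3)$ satisfy \eq{points-finite}. Let $B$ be the matrix \eq{Bf} built from these constants, and set $[P]:=[A]\,[\stpf]\,[B]$. Since $[\stpf]\in\fp$ and $\fp$ is preserved by the action \eq{product-action1} of $\pta$, we have $[P]\in\fp$, that is, $[P]$ is a finite projection. The same matrix identity as above shows
\[
[P]\,[z_1^r,\,z_2^r,\,z_3^r,\,1]^{\tr}=[A]\,[z_1^r+c_1,\,z_2^r+c_2,\,z_3^r+c_3]^{\tr}=[x^r,\,y^r,\,1]^{\tr}
\]
for each $r$, i.e.\ $[P]$ maps $Z$ onto $X$.

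There is no genuine obstacle here. Unlike the curve case, we do not have to worry about density or the exceptional behavior at the center of projection, since the only requirement is that finitely many preimages have defined images; this is automatic provided none of the $(z_1^r,z_2^r,z_3^r)$ happens to coincide with the projection center, and if some $r$ yields a zero last homogeneous coordinate on the right-hand side of \eq{points-finite}, then no $[A]\in\pgl(3)$ can satisfy the equality, so such cases are already excluded by the hypothesis. The analogous statement for affine projections (and its proof, following Theorem~\ref{main-affine-camera} and Proposition~\ref{transa}) would proceed in exactly the same way, with $[\stpf]$ replaced by $[\stpa]$ and $B$ drawn from the list in Remark~\ref{B-cases}.
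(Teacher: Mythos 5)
Your proof is correct and is exactly what the paper intends: the paper gives no separate argument for Theorem~\ref{main-finite-points}, stating only that it is a ``straightforward adaptation'' of the proof of Theorem~\ref{main-finite-camera} via the factorization $[P]=[A]\,[\stpf]\,[B]$ from Proposition~\ref{trans_f}, which is precisely the argument you wrote out. The only nit is your closing aside: a zero last homogeneous coordinate in $[z_1^r+c_1,\,z_2^r+c_2,\,z_3^r+c_3]$ does not by itself obstruct \eq{points-finite}, since a general $[A]\in\pgl(3)$ can map a point at infinity to $[x^r,y^r,1]$; the genuinely excluded case is only when all three entries vanish (the point sits at the projection center), but this side remark does not affect the validity of the main argument.
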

 \begin{theorem} \label{main-affine-points}{(\sc affine projection criteria for lists of points.)} 
  A given list $Z=(\z^1,\dots, \z^m)$ of $m$ points 
 in $\RR^3$ with coordinates $\z^i=(z^r_1,z^r_2,z^r_3)$, $r=1\dots m$ projects onto a given list $X=(\x^1,\dots, \x^m)$  of  $m$ points in $\RR^2$  with coordinates $\x^r=(x^r,y^r)$  by an affine  projection if and only if there exist  $c_1, c_2\in \RR$, an  ordered triplet $(i,j,k)\in \left\{(1,2,3),\, (1,3,2), \,(2,3,1)\right\}$ and $[A]\in\aff(2)$, such that    
\beq\label{points-affine} [x^r,y^r,1]^{tr}=[A]\, \left[z^r_i+c_1 \,z^r_k,\,z^r_j+c_2\,z^r_k, 1\right]^{tr}\mbox{ for }r=1\dots m.\eeq
\end{theorem}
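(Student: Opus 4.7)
My plan is to mirror the proof of Theorem~\ref{main-affine-camera} almost verbatim, the essential point being that the argument there never used the continuous nature of the curve; it used only the factorization of an arbitrary affine projection established in the proof of Proposition~\ref{transa}. Since the correspondence in Theorem~\ref{main-affine-points} is already indexed point-by-point (point $\z^r$ must go to point $\x^r$), I will not need any density or set-theoretic cleanup that was required by Definition~\ref{def-proj}; the equalities \eq{points-affine} are literal equalities in $\PP^2$ for each $r$.

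For the forward direction, assume there is an affine projection $[P]\in\ap$ such that $[x^r,y^r,1]^{tr}=[P][z_1^r,z_2^r,z_3^r,1]^{tr}$ for all $r=1,\dots,m$. By the proof of Proposition~\ref{transa}, together with Remark~\ref{B-cases}, there exist $c_1,c_2\in\RR$, a triplet $(i,j,k)\in\{(1,2,3),(1,3,2),(2,3,1)\}$, an element $[A]\in\aff(2)$, and a matrix $B$ as listed in Remark~\ref{B-cases} such that $[P]=[A]\,[\stpa]\,[B]$. A direct matrix computation shows that
\[
[\stpa]\,[B]\,[z_1^r,z_2^r,z_3^r,1]^{tr}=[z_i^r+c_1 z_k^r,\; z_j^r+c_2 z_k^r,\; 1]^{tr},
\]
so substituting into $[x^r,y^r,1]^{tr}=[A][\stpa][B][z_1^r,z_2^r,z_3^r,1]^{tr}$ yields \eq{points-affine}.

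For the reverse direction, suppose $c_1,c_2\in\RR$, a triplet $(i,j,k)\in\{(1,2,3),(1,3,2),(2,3,1)\}$, and $[A]\in\aff(2)$ are given such that \eq{points-affine} holds for every $r$. Let $B$ be the matrix from Remark~\ref{B-cases} corresponding to $(i,j,k)$ with the chosen $c_1,c_2$, and define $[P]:=[A]\,[\stpa]\,[B]$. Since $[\stpa]\in\ap$ and $\ap$ is invariant under the action \eq{product-action1} of $\affp$, we have $[P]\in\ap$. The same direct computation as above shows $[P][z_1^r,z_2^r,z_3^r,1]^{tr}=[x^r,y^r,1]^{tr}$ for each $r$, so $Z$ projects onto $X$ via the affine projection $[P]$.

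Because the entire argument is just the Proposition~\ref{transa} factorization applied to each of the $m$ points simultaneously, I do not anticipate any real obstacle; the only thing to verify carefully is the matrix product $[\stpa][B]$ in each of the three cases of Remark~\ref{B-cases}, which is a routine computation. The analogous argument, with the decomposition $[P]=[A][\stpf][B]$ from Proposition~\ref{trans_f} with $B$ as in \eq{Bf} and $[A]\in\pgl(3)$, proves Theorem~\ref{main-finite-points}.
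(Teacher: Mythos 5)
Your proposal is correct and matches the paper's intent exactly: the paper itself gives no separate proof, stating only that Theorems~\ref{main-finite-points} and~\ref{main-affine-points} follow by straightforward adaptation of the proofs of Theorems~\ref{main-finite-camera} and~\ref{main-affine-camera}, which is precisely the pointwise application of the factorization $[P]=[A][\stpa][B]$ from Proposition~\ref{transa} and Remark~\ref{B-cases} that you carry out. Your observation that the point-by-point indexing removes the density/set-image subtleties of Definition~\ref{def-proj} is a accurate and worthwhile clarification of why the adaptation is indeed straightforward.
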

The proofs of Theorems~\ref{main-finite-points} and~\ref{main-affine-points} are straightforward adaptations of 
the proofs of Theorem~\ref{main-finite-camera} and~\ref{main-affine-camera}. 
The reduced affine projection criteria for curves, given in Corollary~\ref{reduced-aff-camera}, is  adapted  to the finite  lists in an analogous way.

The finite  and  the affine projection problems for lists of $m$  points are, therefore, reduced to  a modification of the problems of equivalence of two lists of  $m$ points in $\PP^2$ under the  action of $\pgl(3)$ and $\aff(2)$ groups, respectively. A separating set of invariants  for  lists of $m$ points in $\PP^2$ under $\aff(2)$-action consists of  ratios of certain areas and is listed, for instance, in Theorem~3.5 of \cite{olver01}. Similarly,  a separating set of invariants  for lists of $m$ ordered points in $\PP^2$  under $\pgl(3)$-action consists of cross-ratios of certain areas and is listed, for instance, in Theorem~3.10 in  \cite{olver01}.
In the case of finite projections, we, thus, obtain  a system of polynomial  equations on $c_1, c_2$ and $c_3$ that have solutions if and only if 
the given set $Z$ projects to the given set $X$. An analogue of Algorithm~\ref{alg-finite} for finite lists of points follows. The affine projections are treated in a similar  way.

 Figure~\ref{dc} illustrates that a solution of  the projection problem for lists of points does not provide  an immediate  solution to the discretization of the projection problem for curves. Indeed, 
if $Z=(\z^1,\dots, \z^m)$ is a discrete sampling of a curve $\Gamma$ and  $X=(\x^1,\dots, \x^m)$  is a discrete sampling of $\gamma$, these lists might not be in a correspondence under a projection even  when the curves are related by a projection.  Some approaches  to discretization of the projection algorithms for curves are discussed in the next section.

\begin{figure}\centering
\epsfig{file=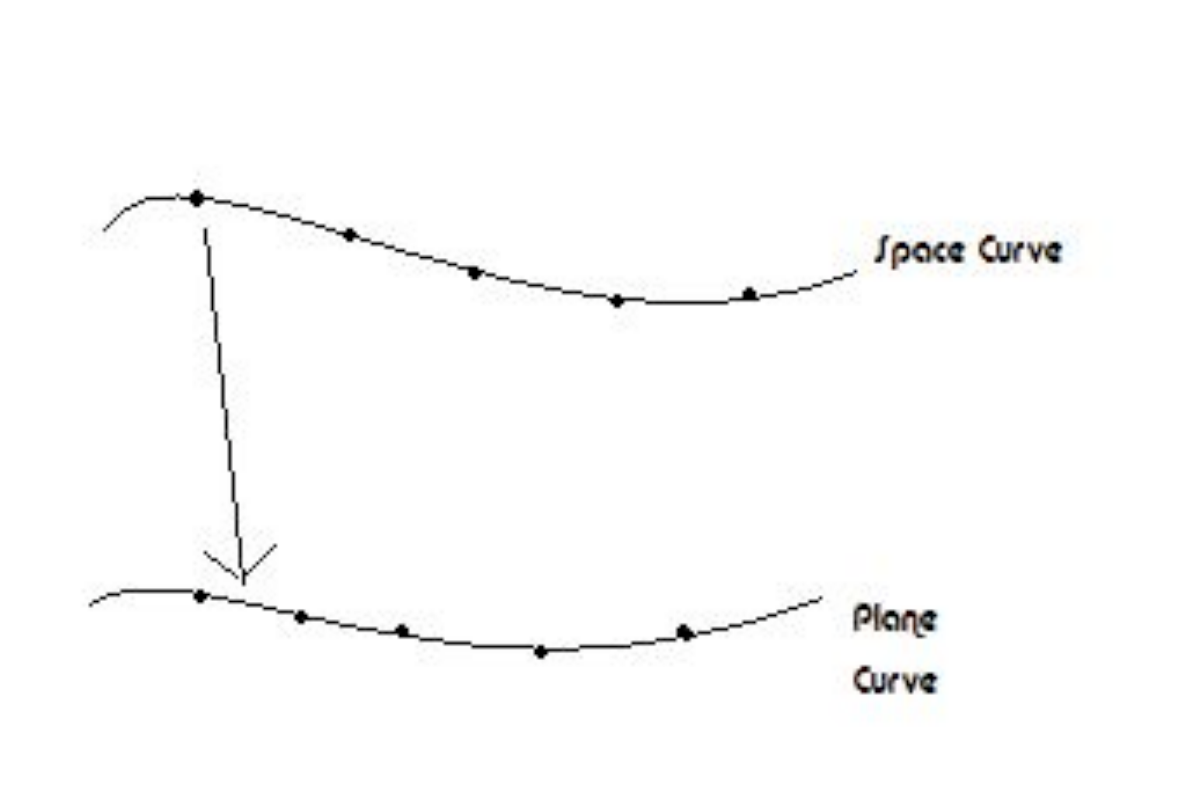,height=1in, width=1.5in}\caption{Projection problem for curves vs.~projection problems for  finite ordered sets of  points}\label{dc}\end{figure}
%


\section{Directions of further research}\label{discussion}
 
The projection criteria developed in Section~\ref{criteria}  reduce the  problem of object-image correspondence for curves  under a projection from $\RR^3$ to $\RR^2$, to a variation of  the group-equivalence problem for curves in $\RR^2$. We  use differential signature construction \cite{calabi98} to 
address  the group-equivalence problem. In practical applications, curves are often presented by samples of  points.  In this case,  invariant numerical approximations of differential invariants presented in \cite{calabi98, boutin00} may be used to obtain signatures. Differential invariants and their approximations are highly sensitive to image perturbations and, therefore, are not practical in many situations.
Other types of invariants, such as semi-differential (or joint) invariants \cite{vang92-1, olver01}, integral invariants \cite{sato97, hann02, feng09} and moment invariants \cite{hu62} are less sensitive to image perturbations and may be employed to solve the group-equivalence problem. One of our future projects is to develop variations of Algorithms~\ref{alg-finite} and~\ref{alg-affine}  that are based on alternative solutions of the group-equivalence problem.

 One of the essential contributions of \cite{stiller06, stiller07} is the definition of an  object/image distance between ordered sets of $m$  points in   $\RR^3$ and $\RR^2$, such that the distance is zero   if and only if these sets are related by a projection.  Since, in practice, we are  given  only an approximate position of  points, a ``good'' object/image  distance   provides a tool for deciding whether a given set of points in $\RR^2$ is a good approximation of a  projection of a given set of points in $\RR^3$. Defining such object/image distance in the case of curves is an important direction of further research.

Although the projection algorithm presented here may not be immediatly applicable to real-life images, we consider this work  to be a first step toward  the development of more efficient algorithms to  determine projection correspondence for curves and other continuous  objects -- the problem  whose algorithmic solution, for classes of  projections with large degrees of freedom,   does not seem to appear in the literature.

\vskip3mm
\noindent{\bf Acknowledgements:} This project was inspired by a discussion with Peter Stiller of paper \cite{stiller07} during IS\&T/ SPIE 2007 symposium. We also thank Hoon Hong and Peter Olver for discussions of this project.\bibliographystyle{abbrv}
\bibliography{bib-11}
\end{document}